\newcommand{\compilehidecomments}{false}
	\newcommand{\siwei}[1]{}
	\newcommand{\wei}[1]{}
	\newcommand{\yifei}[1]{}
    \newcommand{\shi}[1]{}
    \newcommand{\shanghua}[1]{}
	\newcommand{\siwei}[1]{{\color{red} [\text{Siwei:} #1]}}
	\newcommand{\wei}[1]{{\color{purple} [\text{Wei:} #1]}}
	\newcommand{\yifei}[1]{{\color{teal} [\text{Yifei:} #1]}}
    \newcommand{\shi}[1]{{\color{green} [\text{Shi:} #1]}}
    \newcommand{\shanghua}[1]{{\color{blue} [\text{Shanghua:} #1]}}
\newtheorem{theorem}{Theorem}
\newtheorem{fact}[theorem]{Fact}
\newcommand{\I}{\mathbb{I}}
\title{ALPINE: Unveiling The Planning Capability of Autoregressive Learning in Language Models}
\author{Siwei Wang$^{1}$\thanks{\ denotes equal contributions. Corresponding author: Wei Chen (\texttt{weic@microsoft.com})} \hspace{0.1cm} 
Yifei Shen$^{1*}$\hspace{0.1cm} 
Shi Feng$^{2}$\hspace{0.1cm}  
Haoran Sun$^{3}$\hspace{0.1cm} 
Shang-Hua Teng$^{4}$\thanks{Supported by a Simons Foundation Investigator Award.}\hspace{0.1cm} 
Wei Chen$^{1}$\Envelope
\\
   \normalfont $^1$Microsoft Research Asia (\texttt{\{siweiwang, yifeishen, weic\}@microsoft.com})\\
  $^2$Harvard University (\texttt{shifeng@fas.harvard.edu}) \\
  $^3$Peking University (\texttt{sunhaoran0301@stu.pku.edu.cn}) \\
  $^4$University of Southern California (\texttt{shanghua@usc.edu})
}
\begin{document}

\date{}
\maketitle

\sloppy

\begin{abstract}
Planning 
is a crucial element of both human intelligence and contemporary large language models (LLMs).
%
In this paper, we initiate a theoretical investigation into the 
emergence of planning capabilities in Transformer-based LLMs via their next-word prediction mechanisms. We model planning as a network path-finding task, where the objective is to generate a valid path from a specified source node to a designated target node.
 Our mathematical characterization  shows that Transformer architectures can execute path-finding by embedding the adjacency and reachability matrices within their weights. Furthermore, our theoretical analysis of gradient-based learning dynamics reveals that LLMs can learn both the adjacency  and a limited form of the reachability matrices.
 These theoretical insights are then validated through experiments, which demonstrate that Transformer architectures indeed learn the adjacency and an incomplete reachability matrices, consistent with our theoretical predictions.
When applying our methodology to the real-world planning benchmark Blocksworld, our observations remain consistent. Additionally, our analyses uncover a fundamental limitation of current Transformer architectures in path-finding: \emph{these architectures cannot identify reachability relationships through transitivity, which leads to failures in generating paths when concatenation is required}. These findings provide new insights into how the internal mechanisms of autoregressive learning 
facilitate intelligent planning
and deepen our understanding 
of how future LLMs might achieve
more advanced and general planning-and-reasoning capabilities across diverse applications.
\end{abstract}

\vspace{-2mm}
\section{Introduction}

\vspace{-2mm}
Large language models (LLMs) such as ChatGPT have impressed many with their powerful capabilities across a wide range of tasks, including language processing, knowledge extraction, reasoning, planning, coding, tool use, and more \cite{Achiam2023GPT4TR}.
However, we continue to be intrigued by the underlying mechanisms that fuel the power of LLMs. While all current LLMs are built on the Transformer architecture, which uses autoregressive learning to predict the next word in a language sequence, the overarching question remains:
\begin{center}
    \begin{minipage}{0.73\textwidth}
{\em Why does the process of next-word prediction give rise to intelligence?}
\end{minipage}
\end{center}

There is no definite answer to this question yet, but researchers are approaching the problem from various angles, aiming to characterize the power and limitations of LLMs, as well as to capture their underlying acquisition, abstraction, generalization, adaptation, and reasoning mechanisms.
Recently, the mechanisms of grammar learning, knowledge manipulation, scaling laws, and arithmetic operations have been empirically uncovered \cite{allenzhu2023physics,zhu2023physics,zhu2024physics,allen2023physics,zhang2022unveiling,lee2023teaching}. 
Furthermore, theoretical analyses have been conducted on in-context learning, chain-of-thought, and other forms of reasoning \cite{zhang2023trained,giannou2023looped,feng2024towards,yang2024efficient}. 
Beyond these, LLMs' capability for planning—a fundamental component of human intelligence—has also drawn considerable attention. Planning is involved in nearly every aspect of our daily life, such as organizing a task at work, planning a trip, or seeking a mathematical proof of a theorem. Additionally, task planning plays a pivotal role in state-of-the-art LLM-empowered autonomous agents, such as HuggingGPT \cite{shen2024hugginggpt}, Voyager \cite{wang2023voyager}, and Reflection \cite{shinn2024reflexion}. 
Understanding how LLMs complete a planning task can shed light on how the seemingly low-level statistical task of next-word prediction transforms into a high-level intelligent process. Several prior studies have empirically evaluated the planning capabilities of LLMs, yielding both positive and negative results
%
\cite{momennejad2024evaluating,valmeekam2023planning}.
However, the current results are incomplete and do not fully explain why LLMs can or cannot successfully accomplish specific planning tasks (see Appendix~\ref{sec:relatedwork} for a detailed discussion of related works).

Given that planning often involves making sequential selections of next steps 
to achieve a desired goal, it naturally relates to the path-finding task in networks.
For example, autonomous agents (e.g., HuggingGPT~\cite{shen2024hugginggpt}) for scheduling API calls can be likened to finding a call path in the API call graph; a mathematical proof can be seen as a proof path from the axioms to the final theorem~\cite{trinh2024solving}; and a step-by-step solution to a grade-school math problem can be viewed as a path in the dependency graph among the variables~\cite{zhu2024physics21,zhu2024physics22}.
Many previous studies on LLM planning capabilities are related to path-finding. e.g., an LLM planning benchmark called Blocksworld \cite{valmeekam2023planning} can be viewed as path-finding from the initial state of the blocks to the final state in a state transition graph. Furthermore, in neuroscience, planning is often evaluated through path-finding in a maze \cite{whittington2022build}.
Consequently, in this paper, we abstract planning in LLM learning as the following path-finding task: given an underlying directed graph, a Transformer architecture is provided with training data consisting of a collection of paths that specify the source node, the target node, and a path from the source to the target. The task of the language model is then to generate a path for a new source-target pair.
In addition to measuring the performance of the trained model, we examine the internal weighting mechanism and the learning dynamics of the Transformer architecture during the learning and problem-solving process.
This research is part of our broader project, ALPINE (Autoregressive Learning for Planning In NEtworks), which aims to answer the overarching question on the connection between the process of next-word prediction and the emergence of intelligence through the lens of planning.

\noindent{\bf Our Contributions}: Our project initiates a theoretical investigation into the development of planning capabilities in Transformer-based language models 
by focusing on characterizing their expressiveness and learning dynamics in the path-finding task.
First, in Theorem \ref{thm:expressive}, we present a mathematical construction of a Transformer that encodes both the adjacency and reachability matrices of the network, thereby establishing that Transformer architectures possess the expressive capacity to complete the path-finding task.
Then, in Theorem \ref{Thm_1}, we prove that when applying gradient descent to minimize the cross-entropy loss on the training data, a model based on a simplified Transformer architecture can extract the adjacency matrix and a limited form of the reachability matrix, using them to mimic human-like intelligence in path-finding.
Our theoretical analysis further reveals a fundamental limitation of current Transformer architectures: they do not learn certain types of reachability, particularly transitive reachability, resulting in an incomplete ability to reason about future steps when planning.
To validate our theoretical findings, we conduct extensive experiments training Transformers on the path language using autoregressive learning. First, these experiments demonstrate that Transformers achieve high accuracy in the path-finding task (Figure \ref{fig1:five_images}). Second, we show that it is indeed possible to extract both the adjacency and a limited form of the reachability matrices from the Transformers' weights (Figures \ref{Figure_C1},\ref{Figure_C2},\ref{fig3:five_images},\ref{fig4:image6}). Third, we observe a significant drop in test accuracy when the source and target nodes are connected only through concatenated path segments in the training data (Figure \ref{fig4:five_images}).
These findings align with our theoretical analysis, confirming that  \emph{current Transformers have limitations in learning transitive reachability relationships, unlike human intelligence}. Finally, we validate these results on a real-world task planning benchmark, Blocksworld \cite{valmeekam2023planning}, which directly corresponds to the path-finding problem (see Appendix \ref{sec:blocksworld}).

\vspace{-2mm}
\section{Preliminaries}
\label{sec:prelim}
\vspace{-2mm}
Throughout this paper, 
we use the following notations for matrices and vectors: $\bm{a}$ and $\bm{A}$ stand for a column vector and a matrix, respectively. 
Notations $\bm{a}_{(i)}$ and $\bm{A}_{(i,j)}$ 
	denote the $i^{th}$ entry of vector $\bm{a}$ and the $(i,j)^{th}$ entry in 
matrix $\bm{A}$, 
	respectively. 
We also denote the $i^{th}$ row of matrix $\bm{A}$ by $\bm{A}_{(i,:)}$. 

\vspace{-2mm}
\subsection{Autoregressive Transformer Architecture and Loss Function}\label{sec:structure}

\vspace{-2mm}
In this paper, we adopt the standard GPT architecture \cite{radford2018improving}. We use the following notation for the architecture and loss function in our analysis. Let $N$ denote the sequence length, $d$ the embedding size, $H$ the number of heads, $d_k = d / H$ the embedding size per head, and $M$ the vocabulary size. One key component of the architecture is the attention mechanism, which is formulated as:
\begin{align}\label{eq:attention}
    \text{Attention}(\bm{Q}, \bm{K}, \bm{V}) = \textbf{softmax}\left(\frac{\bm{Q}\bm{K}^\top}{\sqrt{d_k}}  \right)\bm{V}
\end{align}
where $\bm{Q} \in \mathbb{R}^{N \times d_k}$, $\bm{K} \in \mathbb{R}^{N \times d_k}$, $\bm{V} \in \mathbb{R}^{N \times d_k}$ are the query, key, and value matrices, respectively. 
Denoting $\bm{X} \in \mathbb{R}^{N \times d}$ as input, the multi-head attention is computed as
    $\text{MHA}(\bm{X}) = \text{Concat}_{i \in [H]} (\text{Attention}(\bm{X}\bm{W}_i^Q, \bm{X}\bm{W}_i^K, \bm{X}\bm{W}_i^V))$,
where $\bm{W}^Q_i \in \mathbb{R}^{d \times d_k}$, $\bm{W}^K_i \in \mathbb{R}^{d \times d_k}$, $\bm{W}^V_i \in \mathbb{R}^{d \times d_k}$ are the learnable weight matrices for the query, key, and value matrices of the $i$-th head. 

The feed-forward layer is a two-layer multi-layer perceptron (MLP) defined as follows:
\begin{align}\label{eq:FFN}
    \text{FFN}(\bm{X}) = \max(\bm{0}, \bm{X}\bm{W}_1 + \bm{1}_{N\times 1}\bm{b}_1^\top)\bm{W}_2 + \bm{1}_{N\times 1}\bm{b}_2^\top
\end{align}
where $\bm{W}_1 \in \mathbb{R}^{d \times 4d}$, $\bm{W}_2 \in \mathbb{R}^{4d \times d}$, $\bm{b}_1 \in \mathbb{R}^{4d}$, and $\bm{b}_2 \in \mathbb{R}^{d}$ are the learnable weight matrices 
and
	$\bm{1}_{N\times x}$ is the all-one matrix with dimension $N\times x$. 
Finally, one Transformer layer is defined as:
\begin{align}\label{eq:transformer}
    \text{Transformer}(\bm{X}) = \text{FFN}(\text{LN}_2(\text{MHA}(\text{LN}_1(\bm{X})) + \bm{X})) + \text{MHA}(\text{LN}_1(\bm{X})) + \bm{X}
\end{align}
where $\text{LN}_1$ and $\text{LN}_2$ are two layer normalizations. 

With these essential components 
in place, we 
proceed to introduce the procedures of GPT. The training data consists of many sequences of tokens, where each sequence is expressed as $\bm{u} = (u_1, \cdots, u_N)$, in which $u_n$ denotes the token id for the $n$-th token in sequence $\bm{u}$. We first represent the tokens in $\bm{u}$ by a one-hot embedding matrix $\bm{U} \in \mathbb{R}^{N \times M}$, where $\bm{U}_{(n, u_n)} = 1$ and $0$ elsewhere. Then there are learnable token embedding matrix $\bm{W}_t \in \mathbb{R}^{M \times d}$ 
and positional embedding matrix $\bm{W}_p \in \mathbb{R}^{N \times d}$, and the input $\bm{H}_0 = \bm{U}\bm{W}_t + \bm{W}_p \in \mathbb{R}^{N \times d}$. 
This input $\bm{H}_0$ is fed into an $L$-layer Transformer, i.e., $\bm{H}_l = \text{Transformer}(\bm{H}_{l-1}) \in \mathbb{R}^{N \times d}$ for $l = 1, \cdots, L$.

Finally, the output embedding goes through another layer normalization $\text{LN}_t$, and then it is multiplied by a learnable output weight matrix $\bm{W}_o \in \mathbb{R}^{d \times M}$ to convert back to probability weights over all possible tokens.
We calculate the output probability vector at position $n$, denoted by $\hat{\bm{u}}_{(n+1)}$, using $ \hat{\bm{u}}_{(n+1)} = \textbf{softmax}((\text{LN}_t(\bm{H}_L))_{(n,:)} \bm{W}_o ), 1 \le n < N$.
This probability vector is used
to predict the token for position $n+1$, which reflects the autoregressive learning paradigm.

\vspace{-0.1cm}
The adopted loss function is the {\em cross-entropy loss} for the next token prediction, given by:
\begin{align}\label{eq:ce_loss}
    \ell(\bm{u}) 
    = -\sum_{n=1}^{N-1} \sum_{k=1}^M \bm{U}_{(n+1,k)} \log \hat{\bm{u}}_{(n+1),k}
\end{align}


\subsection{Path-Planning Dataset: Syntax and Data Sources} \label{sec:pathdata}
The dataset is designed to evaluate GPT's path-planning capability on simple graphs.
It is generated from a directed graph $\mathcal{G} = (\mathcal{V},\mathcal{E})$, where $\mathcal{V}$ is the set of nodes, and $\mathcal{E}$ is the set of edges.
For any two nodes $u,v \in \mathcal{V}$, $(u,v) \in \mathcal{E}$ means that there is a directed edge from  $u$ to  $v$ in $\mathcal{G}$. 
A pair of source node $s$ and target node $t$ is considered a {\em valid pair} if 
 $\mathcal{G}$ contains at least one path from $s$ to $t$. 

The training dataset $\mathcal{D}$ contains sequences in the format ``$s$ $t$ $s$ $a$ $b$ $c$ $t$ $\backslash$n'', 
where $s$ represents the source node token, $t$ the target node token,
$s$ $a$ $b$ $c$ $t$ are tokens for nodes in a valid path from $s$ to $t$, and $\backslash$n 
indicates the end of the sequence.
During testing,  we provide 
	valid pairs of source and target nodes in the format ``$s$ $t$''. The model is tasked with completing the remaining tokens in the sequence.
The completion is  considered correct if the model 
generates a valid path with the correct syntax.

\section{Expressiveness and Learning Dynamics of Transformer Models}
\label{sec:overview}

\subsection{Expressiveness}




\begin{wrapfigure}{r}{0.55\textwidth}
\vspace{-0.4cm}
  \begin{minipage}{0.54\textwidth}
    \begin{algorithm}[H]  
\caption{A handcrafted path-finding algorithm}
\label{alg:gt}
\begin{algorithmic}[1]  
\State \textbf{Input:} Adjacency matrix $\bm{A}$, reachability matrix $\bm{R}$, source node $s$, target node $t$ 
\State Set path $P = [s\ t\ s]$ and set current node $i = s$
\While{$i \ne t$}  
    \State Obtain $S = \{k | \bm{A}_{(i,k)}=1 \text{ and }\bm{R}_{(t,k)}=1\}$
    \State Randomly sample next node $k$ from $S$
    \State Append $k$ to path $P$, and set $i = k$
\EndWhile  
\State \textbf{output} path $P$
\end{algorithmic}  
\end{algorithm} 
  \end{minipage}
\end{wrapfigure}
In our path-finding task, the essential step for completing a path is to predict the next node based on the current information. It is evident that to predict the subsequent node on the path, only information related to the current node and the target node is necessary. 
Algorithm \ref{alg:gt} introduces a 
idealized method that utilizes both the adjacency and  reachability matrices of the graph.
The true adjacency matrix $\bm{A}^{\text{\rm true}}$ and the true reachability matrix $\bm{R}^{\text{\rm true}}$ are defined as: 
\begin{align*}
    \bm{A}^{\text{\rm true}}_{(i,k)} =  
\begin{cases}  
  1, & \text{if } (i,k) \in \mathcal{E}, \\
  0, & \text{otherwise.}  
\end{cases}\qquad
    \bm{R}^{\text{\rm true}}_{(t,k)} =  
\begin{cases}  
  1, & \text{if } k \text{ can reach } t \text{ in } \mathcal{G}, \\
  0, & \text{otherwise.}  
\end{cases}
\end{align*}

\begin{fact}
   Assuming that $t$ is reachable from $s$, then Algorithm \ref{alg:gt} is guaranteed to output a valid path with input $\bm{A} = \bm{A}^{\text{\rm true}}$ and $\bm{R} = \bm{R}^{\text{\rm true}}$.
\end{fact}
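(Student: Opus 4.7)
The plan is to establish an invariant that the current node $i$ maintained by Algorithm~\ref{alg:gt} always satisfies $\bm{R}^{\text{true}}_{(t,i)} = 1$, i.e., $i$ can reach $t$ in $\mathcal{G}$. From this invariant, it follows immediately that the set $S$ constructed in Line~5 is non-empty whenever $i \neq t$, so the loop always has a valid next node to sample, and each step extends $P$ by a legitimate edge of $\mathcal{G}$.

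First I would verify the base case: initially $i = s$, and the hypothesis that $t$ is reachable from $s$ gives $\bm{R}^{\text{true}}_{(t,s)} = 1$. For the inductive step, suppose that at some iteration the current node $i \neq t$ satisfies $\bm{R}^{\text{true}}_{(t,i)} = 1$. Then there exists a directed path $i = v_0, v_1, \ldots, v_m = t$ in $\mathcal{G}$ with $m \geq 1$, so $v_1$ is an out-neighbor of $i$, giving $\bm{A}^{\text{true}}_{(i,v_1)} = 1$, and the suffix $v_1, \ldots, v_m$ witnesses $\bm{R}^{\text{true}}_{(t,v_1)} = 1$. Hence $v_1 \in S$, so $S$ is non-empty. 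For any $k$ sampled from $S$, the definition of $S$ directly gives $\bm{A}^{\text{true}}_{(i,k)} = 1$ (the edge extension is legal) and $\bm{R}^{\text{true}}_{(t,k)} = 1$ (the invariant is preserved after the update $i \leftarrow k$).

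The subtle point, and the main obstacle, is termination: since the algorithm samples uniformly from $S$ without any mechanism to avoid revisits, the generated walk could in principle cycle. I would address this by observing that $\mathcal{G}$ is finite and that the invariant guarantees $t$ is reachable from the current node at every step; therefore from any reachable current node there is a directed path to $t$ of length at most $|\mathcal{V}|$, along which the uniform-sampling procedure assigns probability at least $|\mathcal{V}|^{-|\mathcal{V}|}$ to successfully walking straight to $t$ in the next $|\mathcal{V}|$ iterations. Standard Borel--Cantelli style reasoning then shows the loop terminates almost surely, which is the natural reading of ``guaranteed'' here. Once termination is in hand, the invariant together with the edge-validity argument above shows $P$ records a sequence of nodes where every consecutive pair is an edge of $\mathcal{G}$ ending at $t$, matching the notion of a valid path from Section~\ref{sec:pathdata}.
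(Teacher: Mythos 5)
Your argument is correct. Note that the paper states this Fact without any proof at all, treating it as self-evident, so there is no ``paper's route'' to compare against; your writeup supplies the justification the authors omitted. The invariant you maintain --- that the current node always satisfies $\bm{R}^{\text{true}}_{(t,i)}=1$, which forces $S\neq\emptyset$ via the first edge of any witnessing path and is preserved by every sample drawn from $S$ --- is exactly the reasoning implicit in the paper's informal discussion of Algorithm~\ref{alg:gt}. Your treatment of termination is the one genuinely substantive addition: for a general directed graph with cycles (e.g., the Blocksworld graph $G_{BW}$, which is strongly connected), the uniformly sampled walk is not deterministically finite, so ``guaranteed'' can only mean almost-sure termination, and your block-of-$|\mathcal{V}|$-steps argument with success probability at least $|\mathcal{V}|^{-|\mathcal{V}|}$ per block establishes this correctly. (In the paper's synthetic experiments the graph is a DAG with edges only from lower to higher indices, so termination there is trivial in at most $|\mathcal{V}|$ steps; your argument covers the general case the Fact is actually stated for.) The only cosmetic caveat is that the resulting path may revisit nodes when cycles exist, which still satisfies the paper's notion of a valid path as a sequence of consecutive edges ending at $t$.
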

To illustrate the expressive capacities of the Transformer model, we first demonstrate how to manually construct a Transformer that can perform the path-finding task by simulating 
Algorithm \ref{alg:gt}.  



\begin{restatable}{theorem}{THMEXPRE}\label{thm:expressive}
Given a graph $\mathcal{G}$ (with adjacency matrix $\bm{A}^{\text{\rm true}}$ and reachability matrix $\bm{R}^{\text{\rm true}}$), for every $\varepsilon>0$, there exists a $1$-layer, $1$-head, and $O(|\mathcal{V}|)$-embedding-size Transformer model that generates a valid path for every valid source-target pair $(s,t)$ with probability at least $1-\varepsilon$. 
\end{restatable}

The proof involves encoding the adjacency and reachability matrices into the weights of the FFN and attention layers, respectively, while mimicking the computation of Algorithm ~\ref{alg:gt} (see Appendix~\ref{app:construction}).

\vspace{-0.15cm}
\subsection{Learning Dynamics}
\label{sec:dynamics}
Having established the mathematical existence of a Transformer model capable of accomplishing path-finding in a given network, as demonstrated in Theorem \ref{thm:expressive}, we now shift our focus to the following fundamental question:
{\em Can the Transformer architecture, trained on sufficient path data 
	with an autoregressive loss as in Equation \eqref{eq:ce_loss} and using the gradient descent (GD) method, 
	learn the adjacency and reachability matrices and carry out path-finding similar to the idealized Algorithm~\ref{alg:gt}? }

Theoretically, we notice that the Transformer may not be able to learn the true adjacency and reachability matrices for the underlying graph.
Instead, it can only learn the relevant information that 
is directly encoded in the observed training data $\mathcal{D}$.
Therefore, we define the
{\em observed} adjacency and reachability matrices based on the training data $\mathcal{D}$ as follows.
\vspace{-0.03cm}
\begin{align*}
   \bm{A}^{\text{obs}}_{(i,k)}(\mathcal{D}) &=  
\begin{cases}  
  1, & \text{if } \exists \bm{u} \in \mathcal{D}, n \in [3,N-1] \text{ s.t. } u_n = i, u_{n+1} = k\\
  0, & \text{otherwise}  
\end{cases} \\
%
   \bm{R}^{\text{obs}}_{(t,k)}(\mathcal{D}) &=  
\begin{cases}  
  1, & \text{if } \exists \bm{u} \in \mathcal{D}, n \in [4,N] \text{ s.t. } u_2 = t, u_n = k\\
  0, & \text{otherwise}.  
\end{cases} 
\end{align*}
\vspace{-0.2cm}

Naturally, the observed adjacency matrix $\bm{A}^{\text{obs}}(\mathcal{D})$ only records the edges $(i,k)$ that appear in some path 
within the training data $\mathcal{D}$. 
On the other hand, the observed reachability matrix $\bm{R}^{\text{obs}}(\mathcal{D})$ 
exhibits more nuanced distinctions from
the true reachability matrix.
It only records that node $t$ is reachable from node $k$, if
the training data $\mathcal{D}$ 
	contains a path (sequence) whose target node is $t$ and $k$ appears as a non-source node on the path.
We call such pairs $(t,k)$ \emph{observed reachable pairs}.
Noticeably, reachability through transitivity, i.e., through concatenation of path segments in $\mathcal{D}$, is not observed.



Here we consider the following simplified 1-layer and 1-head Transformer structure: a) The attention weight is only on the target node (the second token), i.e., we manually set every row in $\textbf{softmax}\left(\frac{\bm{Q}\bm{K}^\top}{\sqrt{d_k}}  \right)$ in Eq. \eqref{eq:attention} to be a one-hot vector with the second coordinate being $1$ (this 
is validated in our experiments shown in Figure \ref{fig2:five_images}), and set the positional embedding matrix $\bm{W}_p = \bm{0}$; b) We remove all the layer normalizations, and use
    $\text{FFN}(\bm{X}) =  \bm{X} \bm{W}^{M}$ instead of Eq. \eqref{eq:FFN},
    $\text{Transformer}(\bm{X}) = \text{FFN}(\bm{X}) + \text{MHA}(\bm{X})$ instead of Eq. \eqref{eq:transformer}; c) The token embedding matrix $\bm{W}_t$ and the output weight matrix $\bm{W}_o$ are set to be identity.
The embedding size is the same as the vocabulary size ($d = M$), and we only consider the cross-entropy loss of predicting the next node. 
Since there is only one layer and one head, we use $\bm{W}^V$ to represent the weight of the value matrix in the attention layer. 
Under the above Transformer structure, \begin{equation*}
    (\bm{H}_L)_{(n,:)} \bm{W}_o = (\bm{U}\bm{W}_{t} \bm{W}^{M} + \bm{\alpha} \bm{U}\bm{W}_{t}\bm{W}^{V})_{(n,:)}\bm{W}_o = (\bm{U}\bm{W}^{M} + \bm{\alpha}\bm{U}\bm{W}^{V})_{(n,:)} = \bm{W}^{M}_{(u_n,:)} + \bm{W}^{V}_{(u_2,:)},
\end{equation*}
where $\bm{\alpha}$ is the manually set attention weight matrix (every row is a one-hot vector with the second coordinate being $1$).
Therefore, the probability vector when predicting the $(n+1)$-th token is $\textbf{softmax}(\bm{W}^{M}_{(u_n,:)} + \bm{W}^{V}_{(u_2,:)})$, and 
	the prediction probability $\hat{\bm{u}}_{n+1,k}$ equals 
\begin{equation} \label{eq:predweight}
	\hat{\bm{u}}_{n+1,k} = \frac{\exp(\bm{W}^{M}_{(u_n,k)}+\bm{W}^{V}_{(u_2,k)})}{\sum_{\ell} \exp(\bm{W}^{M}_{(u_n,\ell)}+\bm{W}^{V}_{(u_2,\ell)})}.
\end{equation}


Let $N_{i,j,k}$ be the number of times in $\mathcal{D}$ that: a) the current node is $i$; b) the target node is $j$; c) the next node is $k$, and let $N_{i,j} = \sum_k N_{i,j,k}$, then we have the following theorem. 

\begin{restatable}{theorem}{THMGD}\label{Thm_1}
Under the cross-entropy loss $\ell(\mathcal{D})$, for all $(i,k)$ pairs, i) if $\sum_{j} N_{i,j} = 0$, then ${ \partial \ell(\mathcal{D}) \over \partial \bm{W}^{M}_{(i,k)}}$ is always 0; 
ii) if $\sum_{j} N_{i,j} > 0$ but $\sum_j N_{i,j,k} = 0$, then ${ \partial \ell(\mathcal{D}) \over \partial \bm{W}^{M}_{(i,k)}}$ is always positive; 
iii) if $\sum_j N_{i,j,k} > 0$, then ${ \partial \ell(\mathcal{D}) \over \partial \bm{W}^{M}_{(i,k)}}$ is negative when $\bm{W}^{M}_{(i,k)} \to -\infty$. 
%
Similarly, for all $(j,k)$ pairs, i) if $\sum_{i} N_{i,j} = 0$, then ${ \partial \ell(\mathcal{D}) \over \partial \bm{W}^{V}_{(j,k)}}$ is always 0; ii) if $\sum_{i} N_{i,j} > 0$ but $\sum_i N_{i,j,k} = 0$, then ${ \partial \ell(\mathcal{D}) \over \partial \bm{W}^{V}_{(j,k)}}$ is always positive; iii) if $\sum_i N_{i,j,k} > 0$, then ${ \partial \ell(\mathcal{D}) \over \partial \bm{W}^{V}_{(j,k)}}$ is negative when $\bm{W}^{V}_{(j,k)} \to -\infty$. 
\end{restatable}

\vspace{-0.15cm}
\begin{proof}[Proof Sketch.]
By the definition of the cross-entropy loss in Eq.\eqref{eq:ce_loss}, and the prediction probability vector in Eq.\eqref{eq:predweight},
	the total cross-entropy loss of the model (with matrices $\bm{W}^{M}$, $\bm{W}^{V}$) is 
\begin{eqnarray*}
    \ell(\mathcal{D}) = - \sum_{i,j,k} N_{i,j,k} (\bm{W}^{M}_{(i,k)} + \bm{W}^{V}_{(j,k)}) + \sum_{i,j} N_{i,j} \log \left(\sum_{\ell} \exp(\bm{W}^{M}_{(i,\ell)}+\bm{W}^{V}_{(j,\ell)})\right).
\end{eqnarray*}
\vspace{-0.15cm}

Then we can get that: (the proof for the $\bm{W}^{V}$ part is similar)
\begin{equation}\label{Eq_1}
    {\partial \ell(\mathcal{D}) \over \partial \bm{W}^{M}_{(i,k)}} = - \sum_j N_{i,j,k} + 
    \sum_{j} N_{i,j}{\exp(\bm{W}^{M}_{(i,k)}+\bm{W}^{V}_{(j,k)}) \over \sum_{\ell} \exp(\bm{W}^{M}_{(i,\ell)}+\bm{W}^{V}_{(j,\ell)})}.
\end{equation}
\vspace{-0.15cm}

In case i), $\sum_{j} N_{i,j} = 0$ implies that $\sum_j N_{i,j,k} = 0$. Hence ${\partial \ell(\mathcal{D}) \over \partial \bm{W}^{M}_{(i,k)}}$ is always 0.

In case ii), $\sum_{j} N_{i,j} > 0$ implies that the second term in Eq. \eqref{Eq_1} is positive, while $\sum_j N_{i,j,k} = 0$ implies that the first term in Eq. \eqref{Eq_1} is 0. Hence ${\partial \ell(\mathcal{D}) \over \partial \bm{W}^{M}_{(i,k)}}$ is always positive.

In case iii), when $\sum_{j} N_{i,j} > 0$ and $\bm{W}^{M}_{(i,k)} \to -\infty$, the second term in Eq. \eqref{Eq_1} converges to 0, and it is smaller than $\sum_j N_{i,j,k} > 0$. Hence, ${ \partial \ell(\mathcal{D}) \over \partial \bm{W}^{M}_{(i,k)}}$ is negative when $\bm{W}^{M}_{(i,k)} \to -\infty$.
\end{proof}



\begin{figure}[t]
	\centering
	\begin{subfigure}[True adjacency $\bm{A}^{\text{true}}$
 ]{
			\centering
			\includegraphics[width=0.223\linewidth]{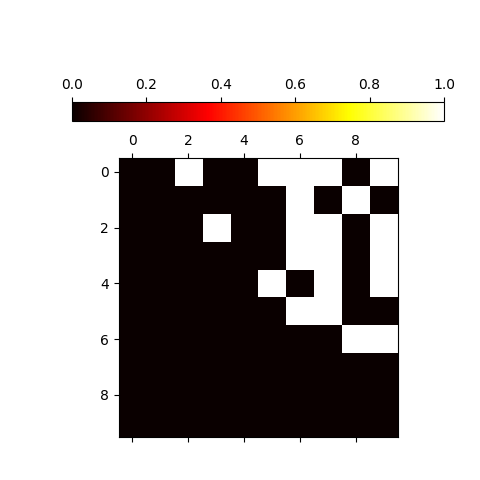}
			\label{Figure_C1_a}}
	\end{subfigure}
	\begin{subfigure}[$\bm{W}^{M}$ under $\mathcal{D}_1$]{
			\centering
			\includegraphics[width=0.223\linewidth]{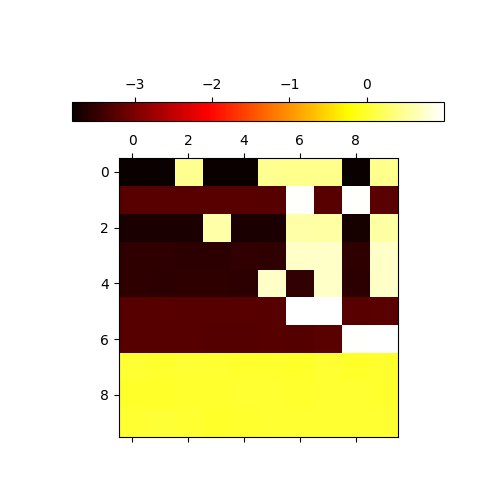}
			\label{Figure_C1_b}}
	\end{subfigure}
	\begin{subfigure}[$\bm{W}^{M}$ under $\mathcal{D}_2$]{
			\centering
			\includegraphics[width=0.223\linewidth]{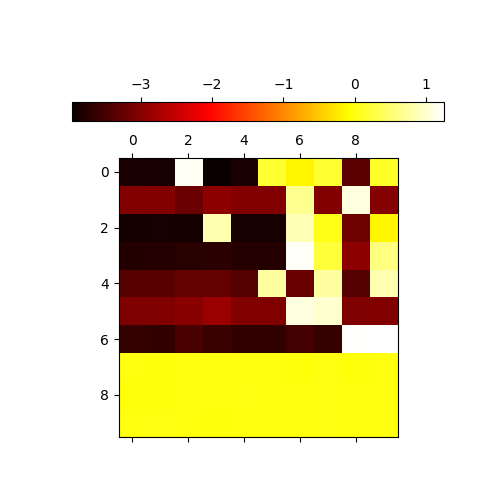}
			\label{Figure_C1_c}}
	\end{subfigure}
	\begin{subfigure}[$\bm{W}^{M}$ under $\mathcal{D}_3$]{
			\centering
			\includegraphics[width=0.223\linewidth]{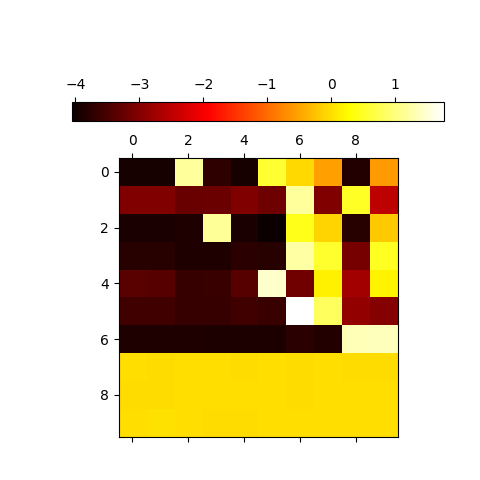}
			\label{Figure_C1_d}}
	\end{subfigure}
	
	\caption{Empirical verification regarding the learning of the adjacency matrix.}
	\label{Figure_C1}
\end{figure}

\vspace{-2mm}
The above technical theorem directly leads to a theoretical explanation on how the model learns the adjacency and reachability information, as explained below.

\vspace{-2mm}
\paragraph{Learning the adjacency matrix.} 
Let $\mathcal{E}(\mathcal{D})$ denote the set of edges appearing in the training dataset $\mathcal{D}$, which corresponds to the observed adjacency matrix $\bm{A}^{\text{obs}}(\mathcal{D})$.
For any $(i,k) \in \mathcal{E}(\mathcal{D})$, $\sum_{j} N_{i,j,k} > 0$, and for any $(i',k') \notin \mathcal{E}(\mathcal{D})$, $\sum_{j} N_{i',j,k'} = 0$. Then from the above theorem, under the gradient descent learning paradigm, $\bm{W}^{M}_{(i',k')}$ will keep decreasing (since its gradient is always positive), while $\bm{W}^{M}_{(i,k)}$ will not (since its gradient becomes negative when its value is sufficiently negative). This tends to make $\bm{W}^{M}_{(i,k)}$ higher than $\bm{W}^{M}_{(i',k')}$ after training.
In this way, the Transformer model {\em learns the information about the observed adjacency matrix} with matrix $\bm{W}^{M}$.

To facilitate comprehension, we 
conducted a simple experiment on the simplified Transformer, and 
present the results in Figure \ref{Figure_C1}, 
In this experiment, we generate a 10-node graph, and use 3 different training datasets $\mathcal{D}_1, \mathcal{D}_2, \mathcal{D}_3$ based on this graph. $\mathcal{D}_1$ contains all the paths with length 1; $\mathcal{D}_2$ contains all the paths with length 1 and $20\%$ of the paths with length higher than 1; and $\mathcal{D}_3$ contains all the possible paths. 
Figure \ref{Figure_C1_a} is the true adjacency matrix of the graph, which is also the observed adjacency matrix for the three datasets.
Figure \ref{Figure_C1_b}, \ref{Figure_C1_c}, \ref{Figure_C1_d} are the $\bm{W}^{M}$ matrices with training datasets $\mathcal{D}_1$, $\mathcal{D}_2$, $\mathcal{D}_3$, respectively.
Upon observation, it becomes evident that
these $\bm{W}^{M}$ matrices all successfully capture
the structural information 
from the adjacency matrix. 



\begin{figure}[t]
	\centering
	\begin{minipage}[c]{0.28\linewidth}
		\begin{subfigure}[True reachability $\bm{R}^{\text{true}}$]{
				\includegraphics[width=0.95\linewidth]{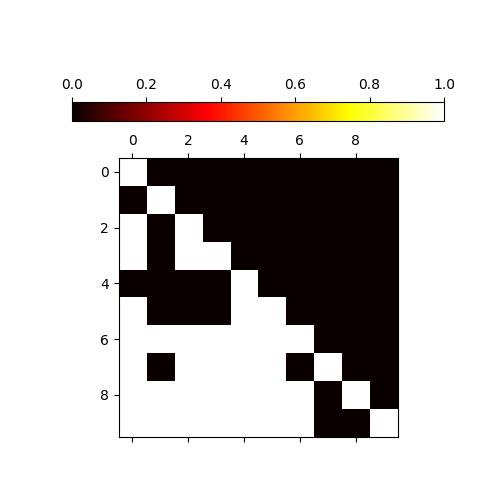}\label{Figure_C2_a}}
		\end{subfigure}
	\end{minipage}
	\begin{minipage}[c]{0.67\linewidth}
		\begin{subfigure}[
  $\bm{R}^{\text{obs}}(\mathcal{D}_1)$]{
				\includegraphics[width=0.31\linewidth]{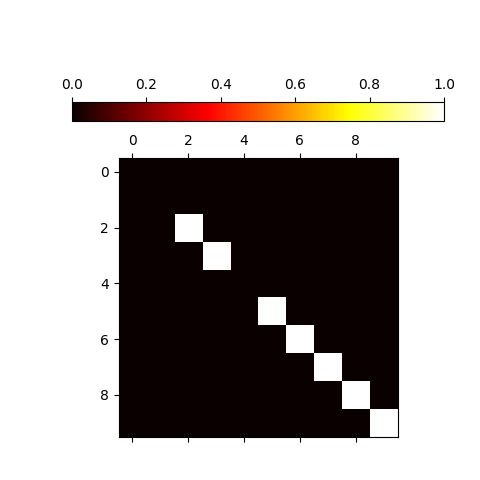}\label{Figure_C2_b}}
		\end{subfigure}
		\begin{subfigure}[
  $\bm{R}^{\text{obs}}(\mathcal{D}_2)$]{
				\includegraphics[width=0.31\linewidth]{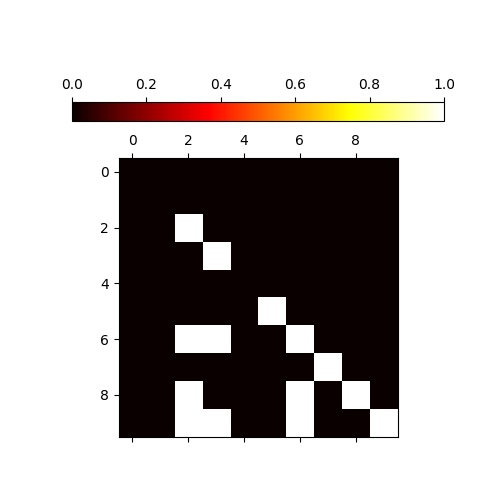}\label{Figure_C2_d}}
		\end{subfigure}
		\begin{subfigure}[
  $\bm{R}^{\text{obs}}(\mathcal{D}_3)$]{
				\includegraphics[width=0.31\linewidth]{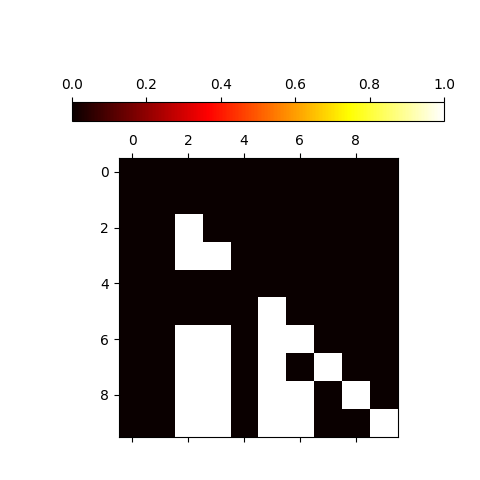}\label{Figure_C2_f}}
		\end{subfigure}
        \begin{subfigure}[
        $\bm{W}^{V}$ under $\mathcal{D}_1$]{
				\includegraphics[width=0.31\linewidth]{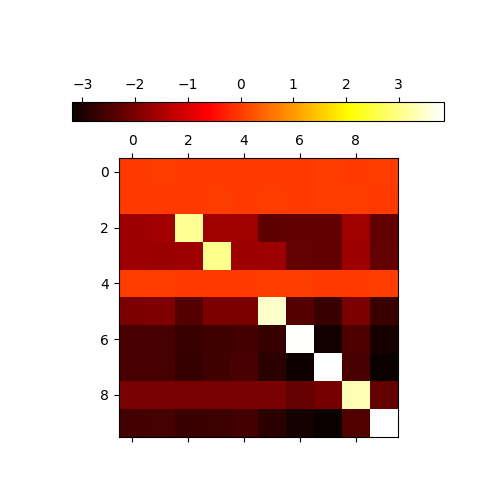}\label{Figure_C2_c}}
		\end{subfigure}
        \begin{subfigure}[
        $\bm{W}^{V}$ under $\mathcal{D}_2$]{
				\includegraphics[width=0.31\linewidth]{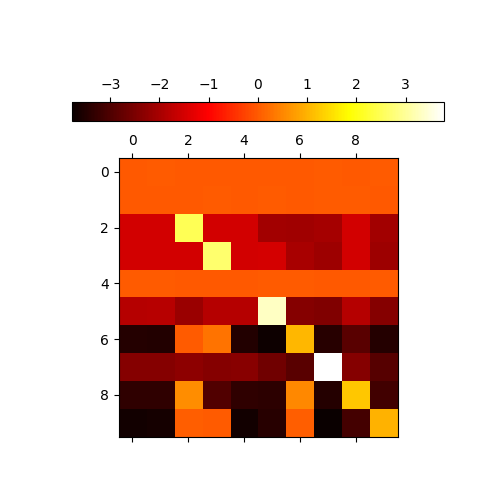}\label{Figure_C2_e}}
		\end{subfigure}
		\begin{subfigure}[
  $\bm{W}^{V}$ under $\mathcal{D}_3$]{
				\includegraphics[width=0.31\linewidth]{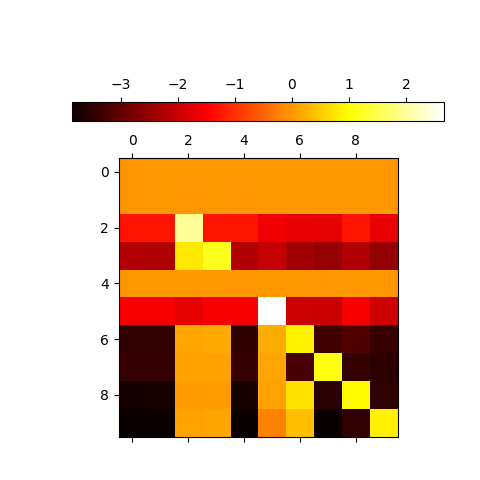}\label{Figure_C2_g}}
		\end{subfigure}
	\end{minipage}
	
	\caption{Empirical verification regarding the learning of the observed reachability matrix.}
	\label{Figure_C2}
\end{figure}

\vspace{-2mm}
\paragraph{Learning the reachability matrix.} Similar to the process of learning the adjacency matrix, since only \emph{observed reachable pairs} $(j,k)$ have $\sum_{i} N_{i,j,k} > 0$, the gradient descent learning paradigm 
 tends to make the $\bm{W}^{V}_{(j,k)}$ terms corresponding to observed reachable pairs $(j,k)$ higher than the $\bm{W}^{V}_{(j',k')}$ terms corresponding to non-observed reachable pairs $(j',k')$ (which is either not reachable or not observed) after the training. 
 In this way, the Transformer model \emph{
 captures the structural information of observed reachability matrix} 
 with weight matrix $\bm{W}^{V}$. 

Figure \ref{Figure_C2} shows the correlation between $\bm{W}^{V}$ and the observed reachabilities under different dataset $\mathcal{D}$'s in the above experiment. Figure \ref{Figure_C2_a} is the real reachability matrix of the graph; Figure \ref{Figure_C2_b}, \ref{Figure_C2_d}, \ref{Figure_C2_f} are the observed reachability matrices in datasets $\mathcal{D}_1$, $\mathcal{D}_2$, $\mathcal{D}_3$, respectively; and Figure \ref{Figure_C2_c}, \ref{Figure_C2_e}, \ref{Figure_C2_g} are the $\bm{W}^{V}$ matrices with training datasets $\mathcal{D}_1$, $\mathcal{D}_2$, $\mathcal{D}_3$, respectively.
 These illustrations 
show that
all the weight matrices $\bm{W}^{V}$ can satisfactorily learn the 
information of the observed reachabilities 
present in the training datasets, but cannot deduce any non-observed reachabilities.

\vspace{-2mm}
\paragraph{Predicting the next node on a path.}
From Eq.\eqref{eq:predweight}, the probability vector for predicting the next node is  
	$\mathbf{softmax}(\bm{W}^{M}_{(u_n,:)} + \bm{W}^{V}_{(u_2,:)})$, where $u_n$ represents the current node, and $u_2$ represents the target node.
This 
resembles the procedure in Algorithm~\ref{alg:gt}:
	it predicts the next node $k$ such that both $\bm{W}^{M}_{(u_n,k)}$ is high (corresponding to $\bm{A}_{(u_n,k)} = 1$) and 
		$\bm{W}^{V}_{(u_2,k)}$ is high (corresponding to $\bm{R}_{(u_2,k)} = 1$).


In summary, our theoretical analysis 
demonstrates that a simplified one-layer, one-head autoregressive 
	Transformer (with perfect attention) can effectively learn crucial
 adjacency and reachability information from the training data through gradient descent training.
 Moreover, it can utilize
 this learned information
 to predict the next node 
 akin to the decision-making process of a human algorithm designer in similar scenarios.
This suggests that,
 when 
 confronted with the path-finding or more general planning task with a given goal, 
 the Transformer learns the 
 structural information 
 to associate the next step 
 with both the current step and the
	goal,
 enabling it to generate the 
 subsequent task step.
 Nevertheless, the Transformer's 
 limitation 
 in learning only the observed reachability matrix---without deducing 
 the complete reachability matrix---hints at potential constraints on the goal-oriented information 
 it can acquire.
This limitation may result in the Transformer failing to grasp novel reachability relationships derived from the transitivity of reachability relations, unlike human intelligence.

\vspace{-2mm}
\section{Empirical Evaluations: Peeking into a Trained Transformer}
\label{sec:empirical}

\vspace{-2mm}
In this section, we conduct extensive experiments on the path-finding task using the general Transformer architecture as described in Section \ref{sec:structure}. The datasets are generated as described below. 




The DAG is generated randomly based on two parameters: the number of nodes $n$, and the probability of edge $p = 0.1$: 
For any $1 \le i < j \le n$, there is an edge $(i,j) \in \mathcal{E}$ with probability $p$. 
%
%
%
Given the DAG, we first find all the possible reachable pairs $(s,t)$. 
Then these reachable pairs are separated into 
	the training set (w.p. 0.5) and the test set (w.p. 0.5), but if edge $(s,t)\in \mathcal{E}$, we always put $(s,t)$ in the training set. 
For a reachable pair $(s,t)$ in the training set, we generate $m=20$ random paths that start at $s$ and end at $t$, and put these $m$ paths into the training dataset. 
%
When generating the random path, at each current node $i$, we find all the possible $k\in \mathcal{V}$ such that $\bm{A}^{\text{true}}_{(i,k)} = 1$ and $\bm{R}^{\text{true}}_{(t,k)} = 1$ (i.e., there is an edge $(i,k)\in \mathcal{E}$, and $k$ could also reach the target node $t$), and uniformly choose a random one in them. 
Moreover, if $(s,t) \in \mathcal{E}$, we always put the one-edge path ``$s$ $t$ $s$ $t$ $\backslash$n'' in the training dataset to guarantee that all edges appear at least once in the training data.

    

\subsection{Accuracy on Test Dataset}

We train Transformer models on the aforementioned training dataset 
and subsequently evaluate the performance of
these models 
using the pairs in the test dataset.
The correctness of a
model's output is determined based on its validity in terms of syntax and 
whether it corresponds to a valid path from $s$ to $t$.
In our experiments, we employ Transformer models with an embedding size of $d = 120$.
We conduct tests using various configurations, ranging from
1-layer and 1-head to 6-layer and 6-head, while considering different graph sizes,
with number of nodes $n$ ranging 
from 100 to 500. 
The accuracy results take average over 10000 trials, and are 
presented in Figure \ref{fig1:five_images} (due to space limits, some results are deferred to Appendix \ref{sec:complete_exp}, and all of them are consistent with our conclusions).
From these results, we 
make the following observations:
    a) When comparing the figures, the accuracy tends to decrease as the number of nodes increases; 
    b)  When examining each row, the accuracy remains relatively stable even as the number of attention heads increases;
    c)  When examining each column, the accuracy shows at most a slight improvement as the number of layers increases. 
    

\begin{figure}[t]
    \centering
    \begin{subfigure}[100 Nodes]{
        \centering
        \includegraphics[width=0.208\linewidth]{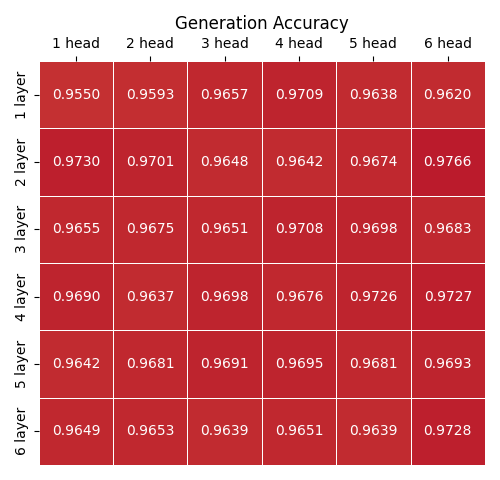}
        \label{fig1:image1}}
    \end{subfigure}
    \begin{subfigure}[200 Nodes]{
        \centering
        \includegraphics[width=0.208\linewidth]{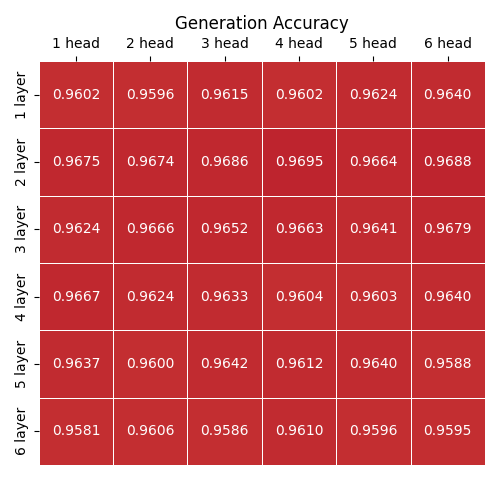}
        \label{fig1:image2}}
    \end{subfigure}
    \begin{subfigure}[300 Nodes]{
        \centering
        \includegraphics[width=0.208\linewidth]{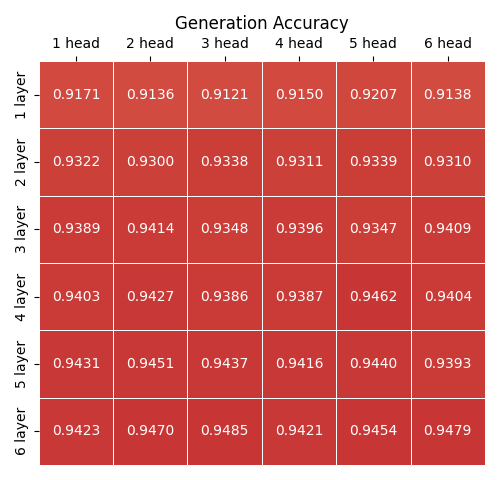}
        \label{fig1:image3}}
    \end{subfigure}
    \begin{subfigure}[400 Nodes]{
        \centering
        \includegraphics[width=0.266\linewidth]{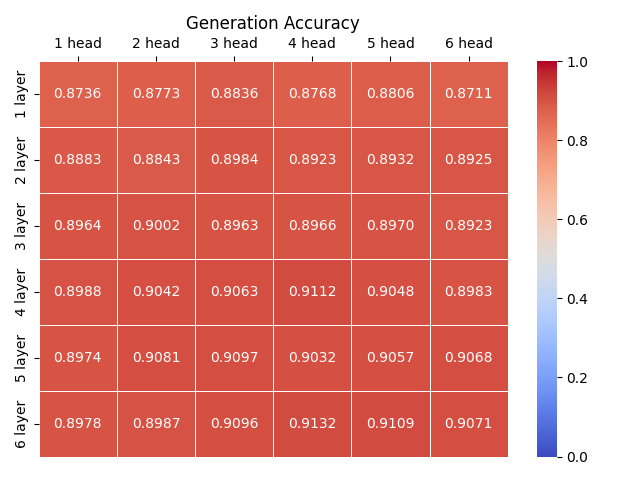}
        \label{fig1:image4}}
    \end{subfigure}

    \caption{Accuracy on the test datasets with embedding size $d= 120$.} 
    \label{fig1:five_images}
    \vspace{-0.35cm}
\end{figure}

\begin{figure}[t]
	\centering
 \vspace{-0.30cm}
	\begin{subfigure}[100 Nodes]{
			\centering
			\includegraphics[width=0.2\linewidth]{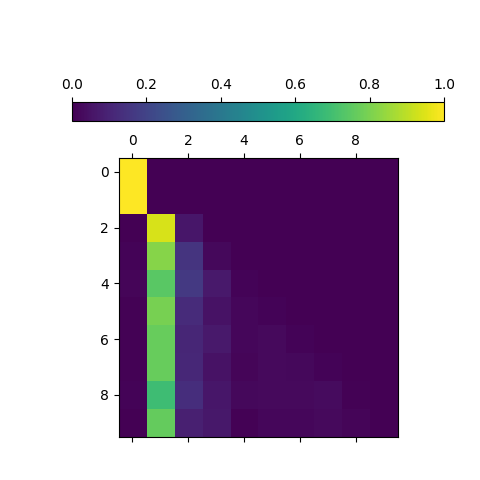}
			\label{fig2:image1}}
	\end{subfigure}
	\begin{subfigure}[200 Nodes]{
			\centering
			\includegraphics[width=0.2\linewidth]{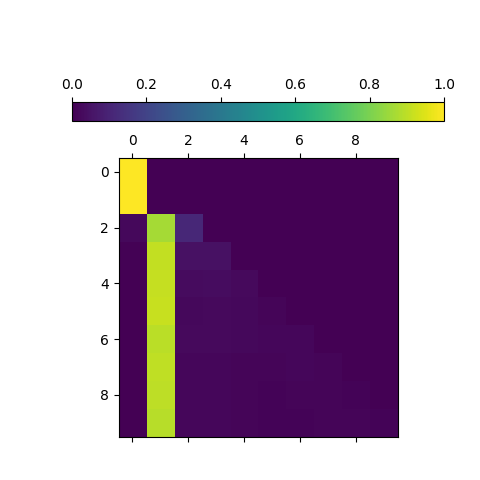}
			\label{fig2:image2}}
	\end{subfigure}
	\begin{subfigure}[300 Nodes]{
			\centering
			\includegraphics[width=0.2\linewidth]{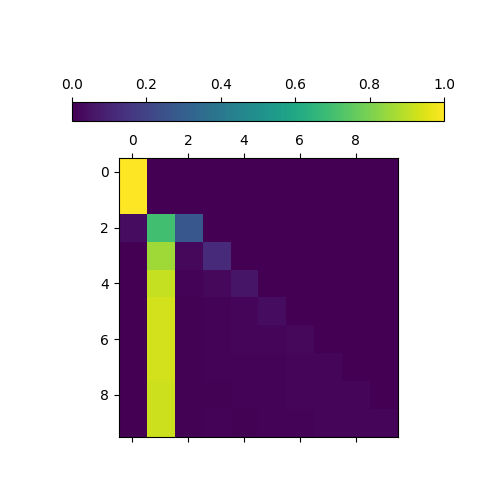}
			\label{fig2:image3}}
	\end{subfigure}
	\begin{subfigure}[400 Nodes]{
			\centering
			\includegraphics[width=0.2\linewidth]{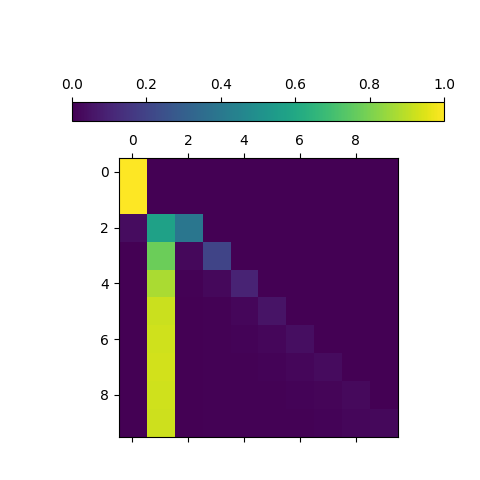}
			\label{fig2:image4}}
	\end{subfigure}
	
	\caption{The average attention in the 1-layer and 1-head Transformers.} 
	\label{fig2:five_images}
 \vspace{-0.35cm}
\end{figure}

The above observations suggest that the embedding size is the most important hyperparameter that affects the accuracy of the model. 
%
On the one hand, when the embedding size is sufficiently large compared to the graph size, even 1-layer and 1-head models perform well.
This coincides with our theoretical analysis, which shows that when the embedding size equals to the graph size, 
	the 1-layer and 1-head structure is enough to predict the next nodes accurately. 
%
On the other hand, when the embedding size is small compared to the graph size, even 6-layer and 6-head Transformers cannot achieve good performance. Because of this, in the following, we concentrate on the explainability of the 1-layer and 1-head Transformer models.
%


\subsection{Peeking into a Trained Transformer}


%
\paragraph{Attention.} 
In our analysis, we assume that the attention is fixed on the target node.
Is this true for the Transformer models learned from real data?
The corresponding results are shown in Figure \ref{fig2:five_images}. 
These results are obtained by looking into the attention mechanism of the 1-layer and 1-head Transformer models, 
	and showing the average (taking on the test dataset) matrix of $\textbf{softmax}\left(\frac{{\bm{Q}\bm{K}^\top}}{\sqrt{d_k}}  \right)$, 
	of which the $n$-th row represents the attention vector for predicting the $(n+1)$-th token.

Note that the second column in these figures represents the attention weights on the second token, which corresponds to the target node in our test data.
We can see that, when predicting the next tokens, almost all the attention weights are concentrated on this column, especially for those models
	with higher accuracy (Figure \ref{fig2:image1} for $n=100$ and Figure \ref{fig2:image2} for $n=200$). 
This demonstrates that indeed the Transformer model learns the correct attention for the path-finding task, and our assumption on the attention weights for the theoretical analysis is reasonable.

\begin{figure}[t]
	\centering
	\begin{subfigure}[100 Nodes]{
			\centering
			\includegraphics[width=0.21\linewidth]{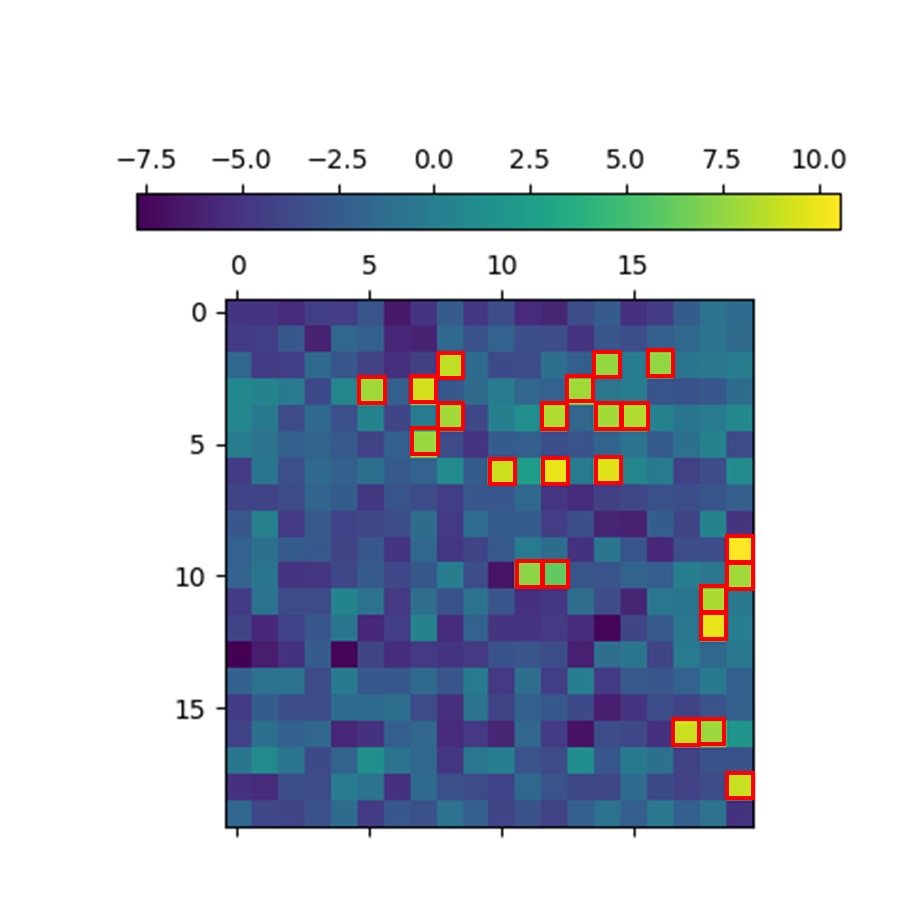}
			\label{fig3:image1}}
	\end{subfigure}
	\begin{subfigure}[200 Nodes]{
			\centering
			\includegraphics[width=0.21\linewidth]{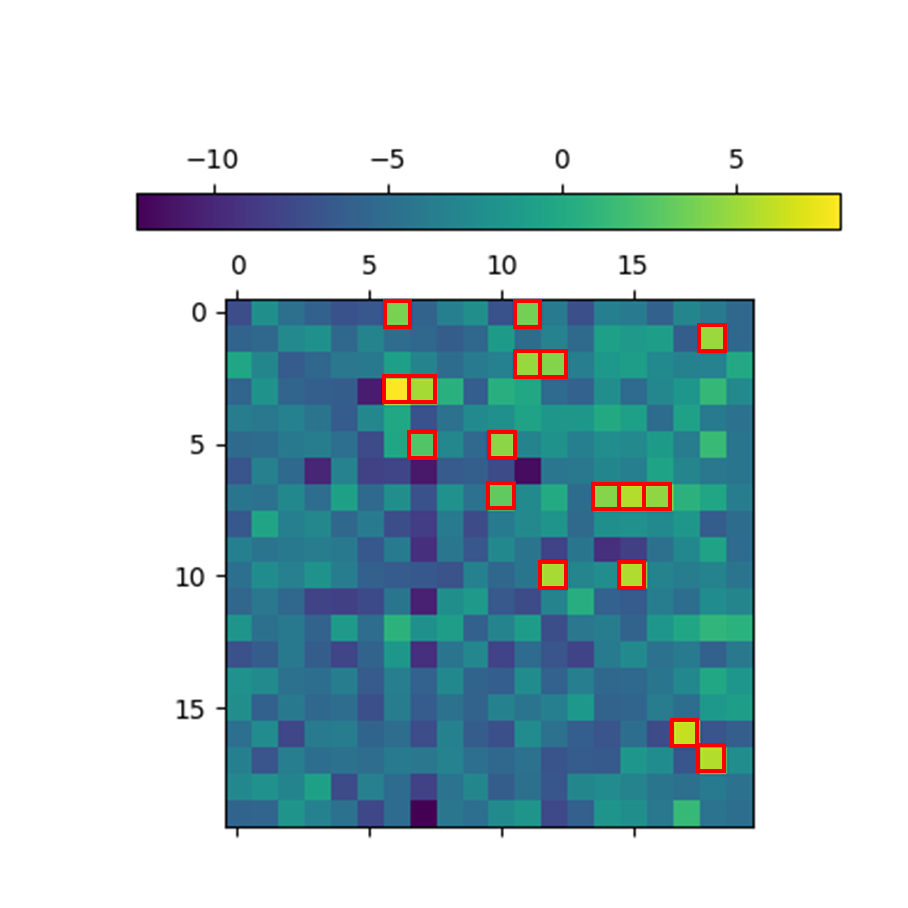}
			\label{fig3:image2}}
	\end{subfigure}
	\begin{subfigure}[300 Nodes]{
			\centering
			\includegraphics[width=0.21\linewidth]{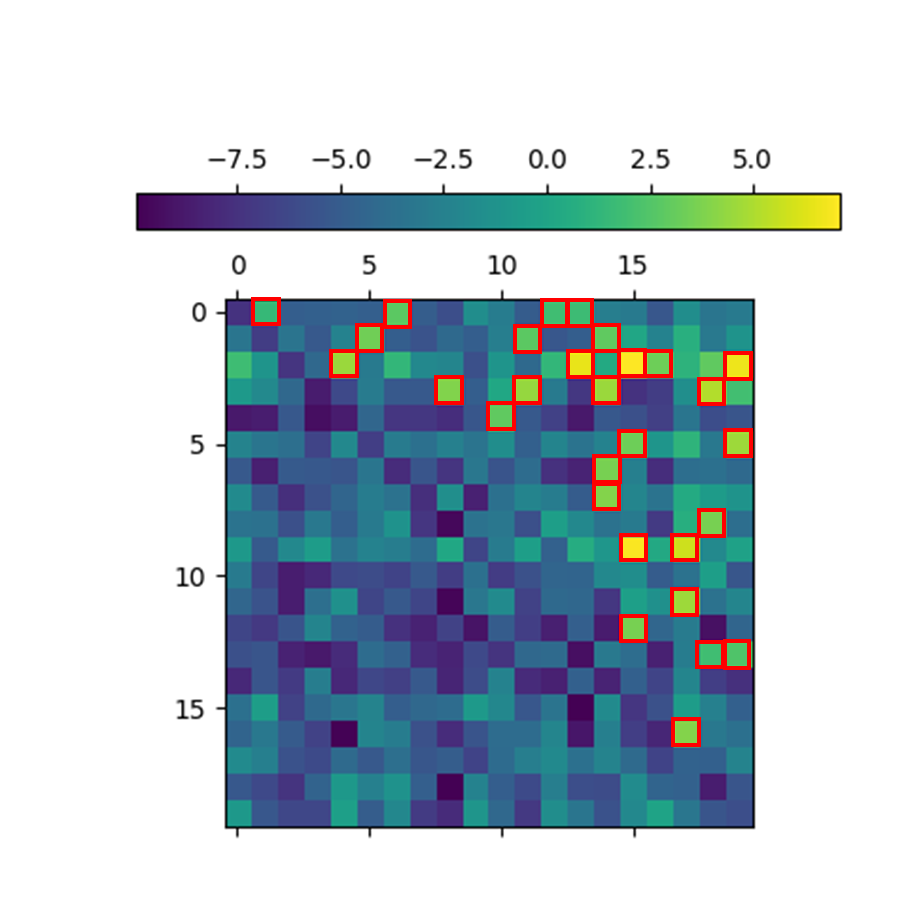}
			\label{fig3:image3}}
	\end{subfigure}
	%
	%
        \begin{subfigure}[Average Weight Gap]{
			\centering
			\includegraphics[width=0.26\linewidth]{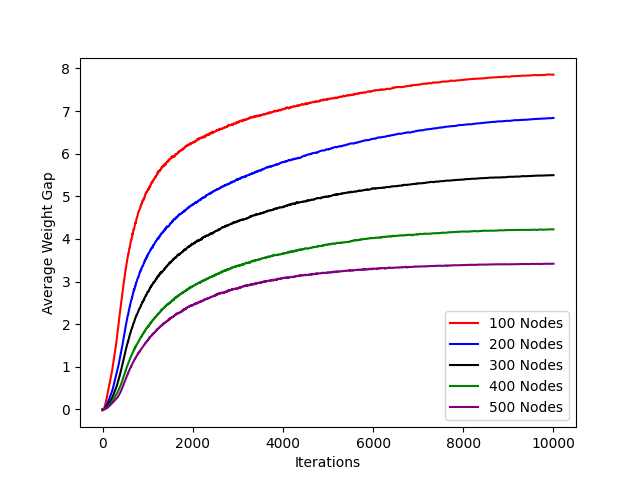}
			\label{fig3:image6}}
	\end{subfigure}
	
	\caption{The first 20 rows and columns of $\bm{W}^{M'}$ 
 (the red boxes correspond to $1$'s in the adjacency matrix), and the average weight gap between edge terms and non-edge terms in $\bm{W}^{M'}$.}
	\label{fig3:five_images}
\end{figure}



\vspace{-2mm}

\paragraph{Adjacency Matrix.}
In the 1-layer and 1-head Transformers, let $\bm{W}^{M'}$ (shown in Figure \ref{fig3:five_images}) be the matrix whose $i$-th row is $\text{FFN}\left({\bm e}_i^\top \bm{W}_t\right)\bm{W}_o + ({\bm e}_i^\top \bm{W}_t)\bm{W}_o$, where ${\bm e}_i$ is the one-hot column vector that represents the token for node $i$. 
Based on the Transformer computation, intuitively this matrix is one of the components in the output that contains information related to the current node. The detailed reason for choosing this matrix is explained in Appendix \ref{sec:rea_matrix}.

In Figure \ref{fig3:image1}, the $\bm{W}^{M'}$ matrix and the adjacency matrix are highly aligned: all large entries in the $\bm{W}^{M'}$ matrix correspond to real edges, and all real edges correspond to large entries in the $\bm{W}^{M'}$ matrix. 
This high accuracy is because the embedding size $d=120$ is higher than the number of nodes $n=100$. If the embedding size is lower than the graph size (Figures \ref{fig3:image2}, \ref{fig3:image3}), we inevitably lose some accuracy when approximating the adjacency matrix by the product of matrices with rank smaller than the graph size. 
Even so, there is still high relevance between $\bm{W}^{M'}$ and the adjacency matrix: almost all real edges correspond to large entries in the $\bm{W}^{M'}$ matrix. 

In Figure \ref{fig3:image6}, we show the gap between the average weight corresponding to edges (i.e., the average of $\bm{W}^{M'}_{(i,j)}$'s with $i < j$ and $(i,j) \in \mathcal{E}$) and the average weight corresponding to non-edges (i.e., the average of $\bm{W}^{M'}_{(i,j)}$'s with $i < j$ and $(i,j) \notin \mathcal{E}$) during the training process. 
These gaps keep increasing until convergence, suggesting that
weights between edges and non-edges are more easily separated as the learning process proceeds. 

%

\vspace{-2mm}

\paragraph{Reachability Matrix.} 
In the 1-layer and 1-head Transformers, let $\bm{W}^{V'}$ be the matrix whose $i$-th row is $({\bm e}_i^\top\bm{W}_t) \bm{W}^V\bm{W}_o + \text{FFN}\left(({\bm e}_i^\top\bm{W}_t) \bm{W}^V \right)\bm{W}_o$, where ${\bm e}_i$ is the one-hot column vector that represents the token for node $i$.
Intuitively, this matrix is the remaining component in the output that contains information related to the target node. The detailed reason is also explained in Appendix \ref{sec:rea_matrix}.


In Figure \ref{fig4:image6}, we show the average weights of three different sets in the graphs: ``obs'' corresponds to the $\bm{W}^{V'}_{(t,k)}$'s with $t \ge k$ and $\bm{R}^{\text{obs}}_{(t,k)} = 1$; ``real$\setminus$obs'' corresponds to the $\bm{W}^{V'}_{(t,k)}$'s with $t \ge k$, $\bm{R}^{\text{obs}}_{(t,k)} = 0$ but $\bm{R}^{\text{real}}_{(t,k)} = 1$; and ``non'' corresponds to the $\bm{W}^{V'}_{(t,k)}$'s with $t \ge k$ and $\bm{R}^{\text{real}}_{(t,k)} = 0$. 
Here we only show the results of graphs with 100 nodes and 200 nodes, since their accuracy is high enough 
and their attention is quite close to being concentrated on the target node. 
When there are more nodes, the ability to approximate the reachability matrix is not enough for us to distinguish it.  
From these average weights, we can see that the Transformer learns $\bm{R}^{\text{obs}}$ quite well, as for those terms in ``real$\setminus$obs'', their weights are almost the same as those in ``non''. 
This echoes our analysis. 

To further demonstrate that $\bm{R}^{\text{real}}$ is not learned as good as $\bm{R}^{\text{obs}}$, we divide the source-target node pairs $(s,t)$ in the test dataset into four categories:
    a) degree 0: $\bm{R}^{\text{obs}}_{(t,s)} = 1$;
    b) degree 1: $(s,t)$ is not of degree 0, while $s$ has at least one out-neighbor node $u$ such that $(u,t)$ is of degree $0$, i.e. $\bm{R}^{\text{obs}}_{(t,u)} = 1$;
    c) degree 2: $(s,t)$ is not of degree 0 and 1, while $s$ has at least one out-neighbor node $u$ such that $(u,t)$ is of degree 1;
    d) degree 3 or more: the remaining $(s,t)$ pairs in the test dataset.
%
Roughly speaking, in our analysis, for $(s,t)$ pairs of degree 0 or 1, we know that there is a node $u$ such that $\bm{A}^{\text{obs}}_{(s,u)} = 1$ and $\bm{R}^{\text{obs}}_{(t,u)} = 1$. 
Then node $u$ will have a large weight, indicating a high accuracy. 
As for $(s,t)$ pairs of degree 2 or more, there is no node $u$ such that both $\bm{A}^{\text{obs}}_{(s,u)} = 1$ and $\bm{R}^{\text{obs}}_{(t,u)} = 1$. 
%
In this case, the high-weight entry when predicting the next node of $s$ is either an adjacent node of $s$ or a recorded node that can reach $t$. 
This should reduce the accuracy. 

%
%
To see this, we check the accuracy of the Transformers on the $(s,t)$ pairs of the four different categories. The results are shown in Figure \ref{fig4:five_images} (b)-(d). 
In these figures, each row of the accuracy matrix is further divided into four sub-rows corresponding to the accuracy of degree-0 pairs, degree-1 pairs, degree-2 pairs, and degree-3 or more pairs
	respectively (in the graph with 100 nodes, there are no test $(s,t)$ pairs in the degree-3 or more category).
From these results, we can see that the accuracy for degree-2 pairs and degree-3 or more pairs is much lower than the two other categories in most cases. 
It indicates that, even with more parameters and a more complex structure (e.g. a 6-layer and 6-head Transformer), the Transformer model has a fundamental difficulty in generating paths for high-degree source-target pairs,
	namely those pairs 
 that can only be connected by concatenating several path segments in the training dataset.
This result demonstrates the validity of our theoretical analysis, 
i.e., after training with gradient descent on cross-entropy loss, the Transformer can only learn observed reachability, and will miss those unobserved reachability
	deduced from the transitivity of the reachability relation.




In summary, our extensive empirical evaluation leads to the following conclusions about the Transformer model in achieving the path-finding task:
(a) With large enough embedding size, the model can achieve high accuracy in general; 
(b) The model achieves its performance by concentrating attention on the target nodes as intended, and learning the information on adjacency and reachability matrices, just as
	what a human would do and as predicted by our theoretical analysis; and
(c) The model may have limitations and fail to learn high-order reachability relations through transitivity, and thus fail to generate paths derived from high-order reachability.



\begin{figure}[t]
    \centering
    \begin{subfigure}[Average Weights]{
        \centering
        \includegraphics[width=0.24\linewidth]{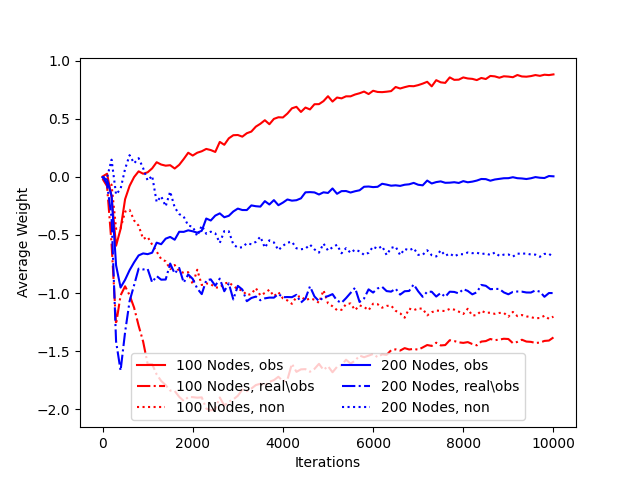}
        \label{fig4:image6}}
    \end{subfigure}\begin{subfigure}[100 Nodes]{
        \centering
        \includegraphics[width=0.20\linewidth]{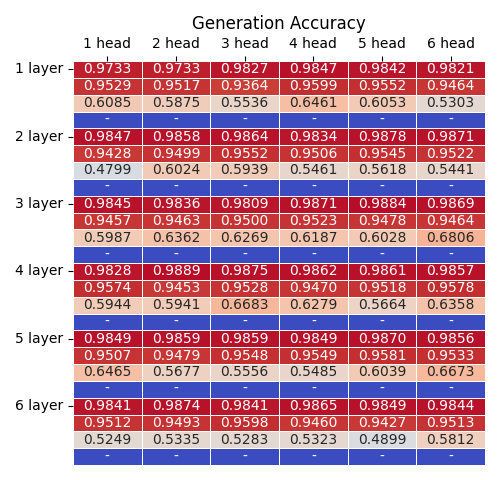}
        \label{fig4:image1}}
    \end{subfigure}
    \begin{subfigure}[200 Nodes]{
        \centering
        \includegraphics[width=0.20\linewidth]{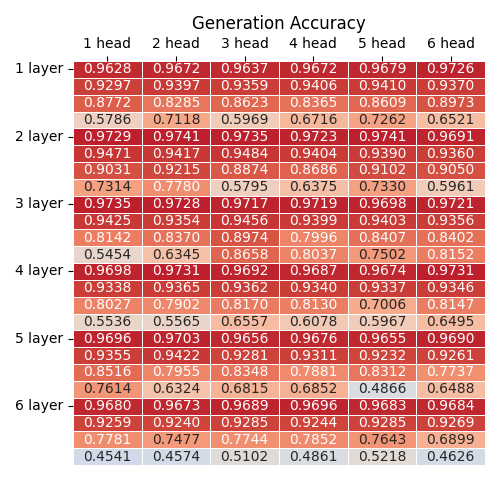}
        \label{fig4:image2}}
    \end{subfigure}
    \begin{subfigure}[300 Nodes]{
        \centering
        \includegraphics[width=0.256\linewidth]{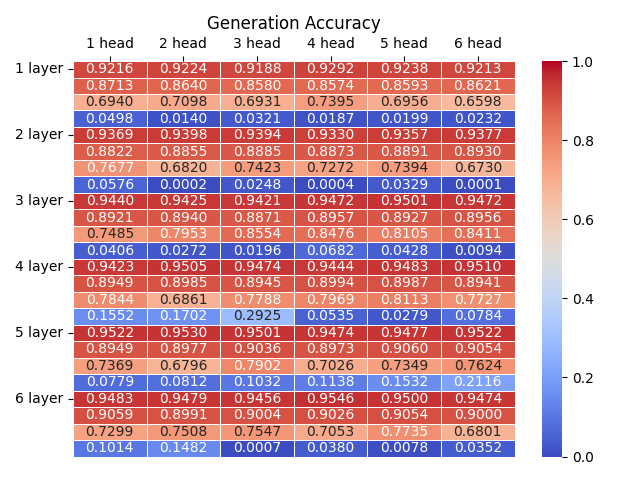}
        \label{fig4:image3}}
    \end{subfigure}

    \caption{The average weights in $\bm{W}^{V'}$, and the accuracy for $(s,t)$'s with different degrees.} 
    \label{fig4:five_images}
\end{figure}

\section{Discussion and Future Work}
\label{sec:discussion}

In summary, this paper and Project ALPINE more broadly conceptualize planning as path-finding in networks, and combine theoretical analysis of the Transformer architecture and autoregressive loss with empirical validation. Our aim is to uncover the mechanisms by which intelligence may emerge from autoregressive Transformer architectures. We analytically demonstrate that Transformers possess the expressiveness required to perform path-finding tasks and that gradient descent on cross-entropy loss enables them to learn necessary—but incomplete—graph information for path-finding. Additionally, we reveal both analytically and empirically that autoregressive training of language models has inherent limitations in the path-finding task.

\noindent{\bf Practical Implications}:
Our findings in LLMs for path planning may have practical implications for the training, testing, and enhancement of language models. In particular, the limitations we identified in current Transformer architectures for transitive reasoning suggest several directions for enhancing LLM frameworks to achieve more advanced and general planning-and-reasoning capabilities across diverse applications. For instance, in data generation for training, creating more diversified datasets that explicitly cover more 
reachability relationships may help the model 
achieve a higher accuracy. When evaluating a language model's planning capability, it may be beneficial to test for higher-order relationships not directly encoded in the training data but requiring chaining and concatenation to assess whether the model can perform transitive planning. Furthermore, by highlighting limitations in current language models, our study motivates future research into improved Transformer architectures, including incorporating transitivity directly into the model structure.

\noindent{\bf Challenges in Reasoning about Unobserved Reachability}:
Technically, the challenge in learning unobserved reachability with current Transformer architectures stems from the nature of next-token prediction loss: learning unobserved reachability incurs a higher training loss. Specifically, when predicting the next token for a given current node $i$ and target node $j$, the optimal distribution for minimizing training loss should align with the observed distribution in the training dataset,
i.e., $\Pr[\text{next node} = k | \text{current node} = i \text{ and target node} = j] = \frac{N_{i,j,k}}{N_{i,j}}$ (as explained in Section~\ref{sec:dynamics}). 
Learning unobserved reachabilities requires deviating from the distribution defined by the training data, which leads to a higher training loss. Consequently, with the current training loss and Transformer architecture, the model cannot learn unobserved reachabilities, such as transitive reachability. To enable the model to learn transitivity, we may need alternative training objectives, such as path accuracy, or structural improvements to the Transformer that allow it to `deduce' unobserved reachabilities. Conceptually, the current training data and loss objective do not provide sufficient information to teach the model transitivity or other derived relationships. Therefore, enhancing transitivity and similar capabilities may require enriching the training data, modifying the objective function, or incorporating new components into the model architecture.

\noindent{\bf Future Directions}: 
Our investigation opens several promising directions for future research: (a) Extending our study to hyper-graphs and hyper-paths, where a hyper-edge represents scenarios requiring multiple preconditions to be met simultaneously in order to carry out the next step, as often seen in task planning and mathematical proofs.
(b) Addressing the limitations of Transformers in path-finding and other planning tasks by exploring richer path-finding languages, fine-tuning, or architectural improvements to LLMs.
%
 (c) Examining connections between the abstract path-finding task and concrete planning tasks (e.g., block manipulation in Blocksworld) to understand whether, and how, Transformers abstract these tasks into path-finding frameworks. 
(d) Investigating in-context path-finding capabilities, where training data includes different graphs with corresponding paths, to see how Transformers learn to find new paths in new graphs.
(e) Exploring the integration of chain-of-thought and backtracking capabilities into Transformers for path-finding, which may offer crucial  insights into enabling these features for general search and planning tasks.

In our ongoing project ALPINE, we plan to deepen our investigation into all the aforementioned fronts. 
We also hope that our work will inspire more researchers to study LLMs through combined theoretical and empirical analysis, with the ultimate goal of enhancing their capabilities and understanding how human-like intelligence can be achieved through statistical learning and AI mechanisms.

\clearpage

\bibliographystyle{abbrv}
\bibliography{sample}

\newpage
\appendix

\section{Related Works}
\label{sec:relatedwork}
\subsection{LLMs for Planning}
Several recent studies have empirically evaluated the planning abilities of large language models. For instance, CogEval has introduced a variety of planning tasks set in mazes and graphs, ultimately finding no evidence to suggest LLMs grasp the intricacies of the maps or the planning tasks themselves \cite{momennejad2024evaluating}. Similarly, another study explored the Blocksworld game, a planning challenge where humans typically achieve success rates above $70\%$, in stark contrast to GPT-3's mere $5\%$ success rate \cite{valmeekam2023planning}. Our paper also proposes a novel approach by formulating a class of planning problems as path-finding on graphs, applying this model to the Blocksworld game and uncovering significant insights, as detailed in Appendix~\ref{sec:blocksworld}.

Despite these seemingly negative evaluations, LLMs have shown remarkable aptitude in executing real-world planning tasks, creating the field of \emph{autonomous agents} \cite{wang2024survey}. 
Certain applications of autonomous agents feature explicit graphs. 
In the tool agent HuggingGPT~\cite{shen2024hugginggpt}, LLMs are deployed to trigger a sequence of external APIs in response to user requests. Here, APIs are conceptualized as graph nodes, with their interrelations represented as edges, and the selection process akin to identifying a path or subgraph that aligns with user demands. 
This scenario is an extension of the settings discussed in this paper, where the graph is text-attributed and the objective function is evaluated through textual analysis. 
In addition, the application of graph search techniques has been shown to enhance the performance of tool agents significantly \cite{liu2023controlllm,liu2024toolnet}. 
This demonstrates that our approach of abstracting planning as path-finding in graphs is reasonable.
The math agent AlphaGeometry utilizes LLMs to solve geometry problems \cite{trinh2024solving}.
By treating lemmas as nodes and their interdependencies as edges, the process of finding a proof of a theorem is analogous to finding a path to the theorem node in the above graph formed by possible lemma nodes and their interdependency edges.
However, \cite{trinh2024solving} only focuses on using LLMs to generate auxiliary constructions, and the reasoning tasks are done by a non-LLM engine. This is very different from our approach. 
There are no explicit graphs in other agents, such as game agents \cite{wang2023voyager}, embodied agents \cite{huang2022language}, and code agents \cite{shinn2024reflexion}. The core strategy in these domains is to employ verbal reinforcement learning within LLMs. 
However, it is noteworthy that any dynamic programming problem characterized by deterministic state transitions can be reformulated as a shortest path problem on a graph, with states and transitions represented as nodes and edges, respectively. As a result, the area of autonomous agents is also closely related to the path-finding task investigated in this paper.

\subsection{LLMs for Graphs}
GPT4Graph \cite{guo2023gpt4graph} and NLGraph \cite{wang2024can} have developed extensive frameworks for assessing LLMs in the context of graph tasks. These frameworks encompass a broad spectrum of challenges, including classic graph problems (e.g., connectivity, cycle detection, and topological sorting), graph neural network (GNN) tasks (e.g., node and graph classification), and semantic graph question answering (e.g., knowledge graph inquiries). They also explore various input formats, such as adjacency lists, edge lists, GML, and GraphML, alongside innovative prompting techniques such as few-shot, role prompting, chain-of-thought, and algorithmic prompting (e.g., stating ``we are using DFS''). 
These studies demonstrate that LLMs possess basic graph processing capabilities, and the choice of prompts and formats significantly influences the performance. 
Yet, they also reveal the models' susceptibility to spurious correlations within graphs.
GPT-4, for instance, only achieves around $50\%$ accuracy on shortest path tasks, even when utilizing complex prompts. 
To our knowledge, our paper presents the first theoretical analysis that identifies and explains the spurious correlations learned by transformers, partially supporting some of the negative outcomes reported in these studies.

There has also been a surge in efforts aiming at bolstering LLMs' performance on graph tasks. Innovations such as GraphGPT \cite{tang2023graphgpt} and GraphLLM \cite{chai2023graphllm}, which incorporate an additional GNN encoder, have shown notable improvements across the aforementioned graph tasks. GraphInstruct \cite{luo2024graphinstruct} seeks to enhance LLMs' capabilities using pure LLM approaches. This involves meticulously documenting the steps of classical algorithms (e.g., BFS and DFS) and fine-tuning LLMs to learn these graph algorithms. This method of procedural supervision has extended the capacity of LLMs in graph tasks from the complexity class $\text{TC}^0$ to P/poly \cite{feng2024towards}.
However, while this approach has yielded performance improvements in simpler tasks such as topological sorting and connectivity, it has proven less effective for more complex challenges, e.g., finding Hamiltonian Paths. 






\subsection{Algorithm Simulation with Transformers}
Recent theoretical investigations have shed light on the capability of the Transformer to simulate algorithms, a topic that has garnered considerable interest. 
%
From the view of discrete algorithms, Transformer models are likened to parallel circuits characterized by polynomial width and constant depth, which places them within the $\text{TC}^0$ complexity class (note that $\text{TC}^0 \subseteq \text{NC}^1 \subseteq P$). On the other hand, despite their impressive expressiveness, the Transformer is theoretically incapable of addressing a range of P-complete problems, including the testing of Context-Free Grammar Membership \cite{merrill2023parallelism}. However, the advent of chain-of-thought prompting has enabled the Transformer to sequentially simulate algorithms, thereby equipping them to tackle P-complete problems in domains such as arithmetic and decision-making \cite{feng2024towards}. The exploration then extends to continuous algorithms, where it has been demonstrated that the Transformer can approximate functions such as matrix inversion, Stochastic Gradient Descent (SGD), and power iterations \cite{giannou2023looped}. 
Our study specifically applies Transformer models to simulate path-finding algorithms, presenting evidence that their expressiveness is sufficient for such tasks (Theorem \ref{thm:expressive}). 
Nevertheless, the usage of autoregressive loss and gradient descent introduces certain limitations, which have not been studied in existing works.

\subsection{Mechansims of LLMs}
LLMs have demonstrated capabilities that exceed the theoretically predicted lower bounds of expressiveness. To demystify this paradox, numerous studies have employed experimental methodologies akin to those used in the physical and biological sciences. Their aim is to decode the mechanisms of LLMs. The foundational strategy is to generate controlled synthetic datasets to analyze how transformers (not necessarily LLMs) complete various tasks. Standard methods for this analysis include visualizing attention patterns to examine computational properties (such as locality and time invariance) and employing linear probing on the hidden states to determine the extent of learning. Given that the training data is synthetic and the ground-truth mappings are generally known, it becomes feasible to isolate the influence of various factors (e.g., prompting strategies, chain-of-thought reasoning, and data formatting). For example, a dataset designed for learning group operations, as detailed in \cite{zhang2022unveiling}, facilitates the exploration of how pretraining, data composition, and neural architecture influence reasoning tasks within LLMs. Similarly, the generation of synthetic context-free grammar (CFG) data, as described in \cite{allen2023physics}, enables training GPT-2 models, uncovering their capacity to learn dynamic programming algorithms for parsing CFGs. Synthetic datasets focusing on biographical knowledge, presented in \cite{zhu2023physics,allenzhu2023physics,zhu2024physics}, probe into the mechanisms of knowledge storage, retrieval, manipulation, and the implications of scaling laws. Moreover, the work in \cite{lee2023teaching} introduces synthetic datasets that aim to understand how smaller LLMs tackle basic arithmetic operations, e.g., addition, and examines the effects of few-shot prompting, pretraining, and model scaling \cite{lee2023teaching}. Our work builds upon these investigations by conducting controlled experiments with a path-finding dataset, thereby shedding light on the complexities and challenges of planning in language models.


\section{Proof of Theorem \ref{thm:expressive}}
\label{app:construction}
{\THMEXPRE*}

\begin{proof} For simplicity, we omit all layer normalizations in this construction. Suppose the input token sequence is ``$s\ t\ s\ u_1\ u_2\ \ldots\ u_k$'' with $k\geq 0$, where $s$ ($=u_0$) and 
    $t$ are the tokens of the source and target nodes, 
    respectively, and nodes $s,u_1,\cdots,u_k$ 
    form a path that can reach node $t$ in graph $\mathcal{G}$.
   Our objective is to construct a $1$-layer and $1$-head Transformer model 
    that generates an out-neighbor $u_{k+1}$ of $u_k$ such that 
there exists at least one path from $u_{k+1}$
    to $t$ in $\mathcal{G}$.

In essence, we 
utilize the attention layer to attend the output 
\emph{solely} 
to the target node $t$. 
This approach allows the distribution of next token $u_{k+1}$ to become a function of both the current node $u_{k}$ and the target node $t$ (as formulated in Section \ref{sec:prelim}). 
    Then, by integrating the adjacency matrix $\bm{A}^{\text{\rm true}}$ into the FFN layer and the reachability matrix $\bm{R}^{\text{\rm true}}$ into the matrix ${\bm W}^V$ in the attention layer, we extract row vectors 
    ${\bm R}^{\text{\rm true}}_{(t,:)}$ and ${\bm A}^{\text{\rm true}}_{(u_k,:)}$ from ${\bm R}^{\text{\rm true}}$ and ${\bm A}^{\text{\rm true}}$, respectively, corresponding to the target node $t$ and current node $u_k$. 
    By selecting proper coefficients, we can 
    let the output be the sum of ${\bm R}^{\text{\rm true}}_{(t,:)}$ and ${\bm A}^{\text{\rm true}}_{(u_k,:)}$. 
    Following the softmax layer, the non-negligible entries in the final vector correspond to the feasible next nodes.
    With this encoding, the Transformer serves as a simulator of Algorithm~\ref{alg:gt} with input $\bm{A} = \bm{A}^{\text{\rm true}}$ and $\bm{R} = \bm{R}^{\text{\rm true}}$.

    Following our notation in Section~\ref{sec:structure}, we adopt $d=|\mathcal{V}|+2$, $M=|\mathcal{V}|+1$ and $N=k+3$.
    In the Transformer, there are $M = |\mathcal{V}|+1$ tokens representing the $|\mathcal{V}|$ nodes and the end-of-line ``$\backslash$n''.  
    Hence, the input tokens can be represented by the one-hot embedding matrix ${\bm U}\in \mathbb{R}^{N\times M}$. 
    We let ${\bm W}_t=\left(\begin{matrix}{\bm I}_{M\times M} \mid {\bm 0}_{M\times 1}\end{matrix}\right)\in \mathbb{R}^{M\times d}$ 
    and 
        $
        {\bm W}_p=\left(\begin{matrix}
                {\bm 0}_{(k+3)\times (|\mathcal{V}|+1)}\mid c_0\cdot {\bm e}_2\end{matrix}\right)\in\mathbb{R}^{N\times d}$,
        here
        ${\bm e}_2$ represents the second unit column vector of dimension $k+3$, $(A \mid B)$ is the notation for matrix concatenation by column,      
        and $c_0$ is a positive parameter to be decided.
    According to the definition of the Transformer, we now have a matrix ${\bm H}_0$ such that the first $|\mathcal{V}|+1$ columns 
    are the tokens of nodes in the sequence and the last column indicates the positions of the target node $t$.
    More specifically, we have\begin{align*}
        {\bm H}_0&=\left(\begin{matrix}
            {\bm e}^\top_{s}&0\\
            {\bm e}^\top_{t}&c_0\\
            {\bm e}^\top_{s}&0\\
            {\bm e}^\top_{u_1}&0\\
            \cdots&\cdots\\
            {\bm e}^\top_{u_k}&0
        \end{matrix}\right)\in\mathbb{R}^{N \times d},
    \end{align*}
    here ${\bm e}_u$ represents the one-hot token vector for node $u$ (with dimension $M = |\mathcal{V}| + 1$). 

    Then we construct the attention layer of the Transformer. We only have one head and let ${\bm W}^K=\left(\begin{matrix}
        {\bm 0}_{(|\mathcal{V}|+2)\times(|\mathcal{V}|+1)}\mid{\bm 1}_{(|\mathcal{V}|+2)\times 1}
    \end{matrix}\right)^\top\in\mathbb{R}^{d\times d}$ and ${\bm W}^Q=\sqrt{d}\cdot{\bm I}_{d \times d}$. Then we can compute ${\bm H}_0{\bm W}^K=\left(\begin{matrix}
        {\bm 0}_{(|\mathcal{V}|+2)\times1}\mid c_0\cdot {\bm 1}_{(|\mathcal{V}|+2)\times 1}\mid{\bm 0}_{(|\mathcal{V}|+2)\times (k+1)}
    \end{matrix}\right)^\top$, i.e., second rows are all $c_0$'s and other rows are all $0$'s,
	and ${\bm H}_0{\bm W}^Q=\sqrt{d}\cdot {\bm H}_0$. 

Therefore, 
\begin{small}\begin{align*}
\frac{({\bm H}_0{\bm W}^Q)({\bm H}_0{\bm W}^K)^\top}{\sqrt{d}}  &=\left(\begin{matrix}
            0&c_0&{\bm 0}_{1\times (k+1)}\\
           0&c_0^2+c_0&{\bm 0}_{1\times (k+1)}\\
            {\bm 0}_{(k+1)\times 1}&c_0\cdot {\bm 1}_{(k+1)\times 1}& {\bm 0}_{(k+1)\times (k+1)}\\
        \end{matrix}\right)
        \in \mathbb{R}^{N \times N}.
    \end{align*}\end{small}

And we can compute the first part of the attention layer as\begin{small}\begin{align*}
        \textbf{softmax}\left(\frac{({\bm H}_0{\bm W}^Q)({\bm H}_0{\bm W}^K)^\top}{\sqrt{d}}  \right)&=\left(\begin{matrix}
            \frac{1}{k+2+e^{c_0}}&\frac{e^{c_0}}{k+2+e^{c_0}}&\frac{1}{k+2+e^{c_0}}\cdot {\bm 1}_{1\times (k+1)}\\
           \frac{1}{k+2+e^{c_0^2+c_0}}&\frac{e^{c_0^2+c_0}}{k+2+e^{c_0^2+c_0}}&\frac{1}{k+2+e^{c_0^2+c_0}}\cdot {\bm 1}_{1\times (k+1)}\\
            \frac{1}{k+2+e^{c_0}}\cdot {\bm 1}_{(k+1)\times 1}&\frac{e^{c_0}}{k+2+e^{c_0}}\cdot {\bm 1}_{(k+1)\times 1}&\frac{1}{k+2+e^{c_0}}\cdot {\bm 1}_{(k+1)\times (k+1)}\\
        \end{matrix}\right)
        \in \mathbb{R}^{N \times N}.
    \end{align*}\end{small} 
    By setting $c_0\rightarrow +\infty$, we obtain:
    \begin{align*}
        \textbf{softmax}\left(\frac{({\bm H}_0{\bm W}^Q)({\bm H}_0{\bm W}^K)^\top}{\sqrt{d}}  \right) \to \left(\begin{matrix}
            0&1&{\bm 0}_{1\times (k+1)}\\
            \cdots&\cdots&\cdots\\
            0&1&{\bm 0}_{1\times (k+1)}
        \end{matrix}\right).
    \end{align*}
     Furthermore, we set ${\bm W}^V = \left(\begin{matrix}c_1\cdot {\bm R}^{\text{\rm true}}&{\bm 0}_{|\mathcal{V}|\times 2}\\{\bm 0}_{2\times |\mathcal{V}|}&{\bm 0}_{2\times 2}\end{matrix}\right)$, 
     where 
     $c_1>0$ is also a parameter to be decided later. Then after the attention layer, we have a matrix as \begin{align*}
        \lim_{c_0\rightarrow+\infty}\text{MHA}({\bm H}_0)=c_1\cdot \left(\begin{matrix}
            {\bm R}^{\text{\rm true}}_{(t,:)}&0&0\\
            \cdots&\cdots&\cdots\\
            {\bm R}^{\text{\rm true}}_{(t,:)}&0&0
        \end{matrix}\right)\in \mathbb{R}^{N\times d}.
    \end{align*}

    Now we construct the feed-forward layer, which is a two-layer MLP. 
    
    For the first layer, the weight matrix ${\bm W}_1$ is set to be, \begin{align*}
        {\bm W}_1=\left(\begin{matrix}
            {\bm I}_{(|\mathcal{V}|+2)\times(|\mathcal{V}|+2)}&{\bm 0}_{(|\mathcal{V}|+2)\times 3(|\mathcal{V}|+2)}
        \end{matrix}\right)\in\mathbb{R}^{d \times 4d}.
    \end{align*}
    and the bias ${\bm b}_1 = -c_1 \cdot {\bm 1}_{4d\times 1}$, which 
    implies that ${\bm 1}_{N\times 1}{\bm b}_1^\top=-c_1\cdot {\bm 1}_{N\times 4d}$. 
    When $c_0$ is large enough, the $(k+3)^{th}$ row of the matrix
    $\max\left({\bm 0},(\text{MHA}({\bm H}_0)+{\bm H}_0){\bm W}_1+{\bm 1}_{N\times 1}{\bm b}_1^\top\right)$ is $\max\left({\bm 0},c_1\cdot \left(\begin{matrix}{\bm R}_{(t,:)}\mid{\bm 0}_{1\times(3|\mathcal{V}|+8)}\end{matrix}\right)+\left(\begin{matrix}{\bm e}_{u_k}^\top \mid {\bm 0}_{1\times (3|\mathcal{V}|+7)}\end{matrix}\right)-c_1\cdot {\bm 1}_{1\times 4(|\mathcal{V}|+2)}\right)$. 
    Since $u_k$ can reach $t$, in $c_1\cdot \left(\begin{matrix}{\bm R}_{(t,:)}\mid{\bm 0}_{1\times(3|\mathcal{V}|+8)}\end{matrix}\right)+\left(\begin{matrix}{\bm e}_{u_k}^\top \mid {\bm 0}_{1\times (3|\mathcal{V}|+7)}\end{matrix}\right)$, only the entry for node $u_k$ is $c_1+1$ while all other entries are $0$ or $c_1$.
    Therefore, the $(k+3)^{th}$ row of the matrix
    $\max\left({\bm 0},(\text{MHA}({\bm H}_0)+{\bm H}_0){\bm W}_1+{\bm 1}_{N\times 1}{\bm b}_1^\top\right)$ can be arbitrarily close to 
    $\left(\begin{matrix}{\bm e}_{u_k}^\top \mid {\bm 0}_{1\times (3|\mathcal{V}|+7)}\end{matrix}\right)$. Here ${\bm e}_u$ represents the one-hot token vector for node $u$ (with dimension $M = |\mathcal{V}| + 1$).

    
    For the second layer, we set \begin{align*}
        {\bm W}_2=\left(\begin{matrix}
            c_2\cdot {\bm A}&{\bm 0}_{|\mathcal{V}|\times 2}\\
            {\bm 0}_{(3|\mathcal{V}|+8)\times |\mathcal{V}|}&{\bm 0}_{(3|\mathcal{V}|+8)\times 2}
        \end{matrix}\right)\in\mathbb{R}^{4d\times d},
    \end{align*}
    where $c_2$ is a positive parameter to be decided, and ${\bm b}_2={\bm 0}$.
    By this way, we have 
    \begin{align*}
    \lim_{c_0 \to \infty}(\text{FFN}(\text{MHA}({\bm H}_0)+{\bm H}_0))_{(k+3,:)} \to \left(\begin{matrix}
        c_2\cdot {\bm A}_{(u_k,:)} \mid {\bm 0}_{1\times 2}
    \end{matrix}\right)\in\mathbb{R}^{|\mathcal{V}|+2}.
    \end{align*}
    Therefore, 
    \begin{align*}
    \lim_{c_0 \to \infty}({\bm H}_1)_{(k+3,:)} \to \left(\begin{matrix}
        c_1\cdot {\bm R}_{(t,:)}+c_2\cdot  {\bm A}_{(u_j,:)} \mid {\bm 0}_{1\times 2}
    \end{matrix}\right)+\left(\begin{matrix}{\bm e}_{u_k}\mid 0\end{matrix}\right)\in\mathbb{R}^{|\mathcal{V}|+2},
    \end{align*}
    where ${\bm e}_u$ represents the one-hot token vector for node $u$ (with dimension $M = |\mathcal{V}| + 1$). 

    Then we fix $c_1=c_2$ and let them be large enough.
    In this case, the dominant entries in $({\bm H}_1)_{(k+3,:)}$ represent the nodes that are both the out-neighbor of $u_j$ and reachable to $t$, since those entries will
    have the value of $2c_1$ while other entries are at most $c_1+1$.
    This means that $({\bm H}_1)_{(k+3,:)}$ can correctly indicates the next node $u_{k+1}$. 
    %
    Specifically, let ${\bm W}_o=\left(\begin{matrix}
	{\bm I}_{(|\mathcal{V}|+1)\times(|\mathcal{V}|+1)}\mid{\bm 0}_{(|\mathcal{V}|+1)\times 1}
	\end{matrix}\right)^\top\in \mathbb{R}^{d \times M}.$
    Then the final output approaches the following vector\begin{align*}
        \lim_{c_0,c_1=c_2 \to \infty}\hat{\bm{u}}_{k+1}&=\lim_{c_0,c_1=c_2 \to \infty}\text{softmax}(({\bm H}_1)_{(k+3,:)}{\bm W}_o)\\
        &=
            \frac{1}{C} \cdot \left(\begin{matrix}
               \mathbb{I}[{\bm A}_{(u_k,1)}=1\wedge {\bm R}_{(t,1)}=1],\cdots,\mathbb{I}[{\bm A}_{(u_k,|\mathcal{V}|)}=1\wedge {\bm R}_{(t,|\mathcal{V}|)}=1],0
            \end{matrix}
            \right), 
    \end{align*}
    where $C$ is the number of nodes that are both the out-neighbor of $u_k$ and reachable to $t$.
    Thus, this encoding guarantees that 
    for any $\varepsilon>0$ and $Q>0$, we can always construct a $1$-layer, $1$-head, and $(|\mathcal{V}|+2)$-embedding-size Transformer that provides the correct next token with probability 
    at least $1-\frac{\varepsilon}{2Q}$ by selecting large enough parameters $c_0,c_1,c_2$.

    Then we prove that there exists a $Q$ such that this Transformer can output a correct path for every valid source and target node pair with probability at least $1-\varepsilon$. Suppose we can output all the nodes that are both the out-neighbor of the current node and reachable to the target node with the same probability in each round without any error. Then, whatever the current node is, there is a probability of at least $\frac{1}{|\mathcal{V}|^{|\mathcal{V}|}}$ that the target node is reached within the next $|\mathcal{V}|$ generated nodes. Therefore, 
    the target node is reached in $c_3 \cdot |\mathcal{V}|$ steps with probability at least $1-\left(1-\frac{1}{|\mathcal{V}|^{|\mathcal{V}|}}\right)^{c_3}$, where $c_3 \in \mathbb{N}$ is a positive integer. We let $c_3 = \log_{1-\frac{1}{|\mathcal{V}|^{|\mathcal{V}|}}}\frac{\epsilon}{2}$ and $Q = c_3 \cdot |\mathcal{V}|$. Then, according to the Union bound, the Transformer can output a correct path in $Q$ steps with an error rate of at most $\frac{\varepsilon}{2Q} \cdot Q + \frac{\varepsilon}{2} = \varepsilon$. 
    
    Finally, there are two different rules (other than output a correct next node): i) when the input sequence is only ``$s$ $t$'', the prediction of the next token should be the source node $s$; ii) when the input sequence is ``$s$ $t$ $s$ $a$ $b$ $c$ $t$'', the prediction of the next token should be $\backslash$n. Case i) can be solved using the Transformer architecture utilizing the position information and attention to the first position; and case ii) can be solved by using the Transformer architecture utilizing the position information and attention to the second position.
To maintain focus on the main construction corresponding to Algorithm~\ref{alg:gt}, we omit the detailed construction for these two boundary cases.
\end{proof}
\section{Proof of Theorem \ref{Thm_1}}
{\THMGD*}

\begin{proof}
We only prove the first part of this theorem, since the proof of the second part is almost 
identical.

By the definition of the cross-entropy loss in Eq.\eqref{eq:ce_loss}, and the prediction weight vector in Eq.\eqref{eq:predweight} for our simplified model, 
	the total cross-entropy loss of the model (with matrices $\bm{W}^{M}$, $\bm{W}^{V}$) is 
\begin{eqnarray*}
    \ell(\mathcal{D}) &=& -  \sum_{\bm{u}\in \mathcal{D}} \sum_{n\ge3} \sum_{k} \bm{U}_{(n+1,k)} \log \hat{\bm{u}}_{(n+1),k}\\
    &=& -  \sum_{\bm{u}\in \mathcal{D}} \sum_{n\ge3} \sum_{k } \bm{U}_{(n+1,k)} \log {\exp(\bm{W}^{M}_{(u_n,k)}+\bm{W}^{V}_{(u_2,k)}) \over \sum_{\ell} \exp(\bm{W}^{M}_{(u_n,\ell)}+\bm{W}^{V}_{(u_2,\ell)})}\\
    &=& -  \sum_{\bm{u}\in \mathcal{D}} \sum_{n\ge3}\sum_{k} \bm{U}_{(n+1,k)} \sum_{i,j} \I[u_n = i, u_2 = j]\log {\exp(\bm{W}^{M}_{(i,k)}+\bm{W}^{V}_{(j,k)}) \over \sum_{\ell} \exp(\bm{W}^{M}_{(i,\ell)}+\bm{W}^{V}_{(j,\ell)})}\\
    &=& - \sum_{i,j,k} N_{i,j,k} \log {\exp(\bm{W}^{M}_{(i,k)}+\bm{W}^{V}_{(j,k)}) \over \sum_{\ell} \exp(\bm{W}^{M}_{(i,\ell)}+\bm{W}^{V}_{(j,\ell)})}\\
    &=& - \sum_{i,j,k} N_{i,j,k} (\bm{W}^{M}_{(i,k)} + \bm{W}^{V}_{(j,k)}) + \sum_{i,j,k} N_{i,j,k} \log \left(\sum_{\ell} \exp(\bm{W}^{M}_{(i,\ell)}+\bm{W}^{V}_{(j,\ell)})\right)\\
    &=& - \sum_{i,j,k} N_{i,j,k} (\bm{W}^{M}_{(i,k)} + \bm{W}^{V}_{(j,k)}) + \sum_{i,j} N_{i,j} \log \left(\sum_{\ell} \exp(\bm{W}^{M}_{(i,\ell)}+\bm{W}^{V}_{(j,\ell)})\right).
\end{eqnarray*}
Then we have that
\begin{equation}\label{Eq_1_copy}
    {\partial \ell(\mathcal{D}) \over \partial \bm{W}^{M}_{(i,k)}} = - \sum_j N_{i,j,k} + 
    \sum_{j} N_{i,j} {\exp(\bm{W}^{M}_{(i,k)}+\bm{W}^{V}_{(j,k)}) \over \sum_{\ell} \exp(\bm{W}^{M}_{(i,\ell)}+\bm{W}^{V}_{(j,\ell)})}. 
\end{equation}

In case i), $\sum_{j} N_{i,j} = 0$ implies that $\sum_j N_{i,j,k} = 0$. Hence ${\partial \ell(\mathcal{D}) \over \partial \bm{W}^{M}_{(i,k)}}$ is always 0.

In case ii), $\sum_{j} N_{i,j} > 0$ implies that the second term in Eq. \eqref{Eq_1_copy} is positive, while $\sum_j N_{i,j,k} = 0$ implies that the first term in Eq. \eqref{Eq_1_copy} is 0. Hence ${\partial \ell(\mathcal{D}) \over \partial \bm{W}^{M}_{(i,k)}}$ is always positive.

In case iii), when $\sum_{j} N_{i,j} > 0$ and $\bm{W}^{M}_{(i,k)}$ converges to $-\infty$, then the second term in Eq. \eqref{Eq_1_copy} converges to 0, and it is smaller than $\sum_j N_{i,j,k} > 0$. Hence, ${ \partial \ell(\mathcal{D}) \over \partial \bm{W}^{M}_{(i,k)}}$ is negative when $\bm{W}^{M}_{(i,k)}$ converges to $-\infty$.
\end{proof}

\section{The Reason of Choosing $W^{M'}$ and $W^{V'}$}\label{sec:rea_matrix}

Note that in the Transformer layer, the output can be written as\footnote{For simplicity, though the layer normalizations $\text{LN}_1$,$\text{LN}_2$ and $\text{LN}_t$ are used in our experiments, we omit them in the equations in this section.}
\begin{equation*}
    \text{FFN}\left(\textbf{softmax}\left(\frac{\bm{Q}\bm{K}^\top}{\sqrt{d_k}}  \right)\bm{V} + \bm{X}\right) + \textbf{softmax}\left(\frac{\bm{Q}\bm{K}^\top}{\sqrt{d_k}}  \right)\bm{V} + \bm{X}.
\end{equation*}
Also noting that we have verified that the attention is concentrated at the second token, then we let $\bm{X}_2 = \bm{U}_{(2,:)}\bm{W}_t$, representing the token embedding of the target node, and $\bm{X}_n = \bm{U}_{(n,:)}\bm{W}_t$, representing the token embedding of the current node. 

\vspace{1cm}

Then we know that
\begin{equation*}
\hat{\bm{u}}_{(n+1)} \approx \left(\text{FFN}\left(\bm{X}_2 \bm{W}^V + \bm{X}_n\right) + \bm{X}_2 \bm{W}^V + \bm{X}_n\right) \bm{W}_o.
\end{equation*}

\begin{table}[ht]
	\centering
	\caption{Cosine Similarity of $\text{FFN}\left(\bm{X}_2 \bm{W}^V + \bm{X}_n\right)\bm{W}_o$ and  $\text{FFN}\left(\bm{X}_2 \bm{W}^V \right)\bm{W}_o$ + $\text{FFN}\left(\bm{X}_n\right)\bm{W}_o$}\label{Tab:1}
	\begin{tabular}{|c|c|c|c|c|c|}
		\hline
		\textbf{Graph} & 100 Nodes & 200 Nodes & 300 Nodes & 400 Nodes & 500 Nodes \\ \hline
		\textbf{Average Cosine Similarity}     & 0.926     & 0.924    & 0.901 & 0.870 & 0.889   \\ \hline
	\end{tabular}
\end{table}

It is straightforward that $\bm{X}_n\bm{W}_o$ contains the information of the current node, and $\bm{X}_2 \bm{W}^V \bm{W}_o$ contains the information of the target node. As for $\text{FFN}\left(\bm{X}_2 \bm{W}^V + \bm{X}_n\right)\bm{W}_o$, we choose to use its linear approximation as $\text{FFN}\left(\bm{X}_2 \bm{W}^V + \bm{X}_n\right)\bm{W}_o \approx \text{FFN}\left(\bm{X}_2 \bm{W}^V \right)\bm{W}_o + \text{FFN}\left(\bm{X}_n\right)\bm{W}_o $. As shown in Table \ref{Tab:1} (which takes average over all possible $\bm{X}_2$'s and $\bm{X}_n$'s), this is a good approximation. 
Then we can treat $\text{FFN}\left(\bm{X}_2 \bm{W}^V \right)\bm{W}_o$ as the information of the target node, and $\text{FFN}\left(\bm{X}_n \right)\bm{W}_o$ as the information of the current node.

Because of this, we let $\bm{W}^{M'}$ be the matrix whose $i$-th row is $\text{FFN}\left({\bm e}_i^\top \bm{W}_t\right)\bm{W}_o + ({\bm e}_i^\top \bm{W}_t)\bm{W}_o$, where ${\bm e}_i$ represents the one-hot column vector for node $i$ (with dimension $M = |\mathcal{V}| + 1$). 
Note that in the simplified Transformer model of Theorem \ref{Thm_1}, there is no $\bm{X}_n\bm{W}_o$ term, and $\bm{W}^{M'}$ is the same as matrix $\bm{W}^M$.
Similarly, we let $\bm{W}^{V'}$ be the matrix whose $i^{th}$ row is $({\bm e}_i^\top\bm{W}_t) \bm{W}^V\bm{W}_o + \text{FFN}\left(({\bm e}_i^\top\bm{W}_t) \bm{W}^V \right)\bm{W}_o$, where ${\bm e}_i$ represents the one-hot column vector for node $i$ (with dimension $M = |\mathcal{V}| + 1$). 
Note that in the simplified Transformer model of Theorem \ref{Thm_1}, there is no $\text{FFN}\left(\bm{X}_2 \bm{W}^V \right)\bm{W}_o$ term, and $\bm{W}^{V'}$ is the same as matrix $\bm{W}^V$.


\vspace{-0.2cm}
\section{Complete Experimental Results on the Synthetic Datasets}\label{sec:complete_exp}



\begin{figure}[t]
    \centering
    \begin{subfigure}[100 Nodes]{
        \centering
        \includegraphics[width=0.30\linewidth]{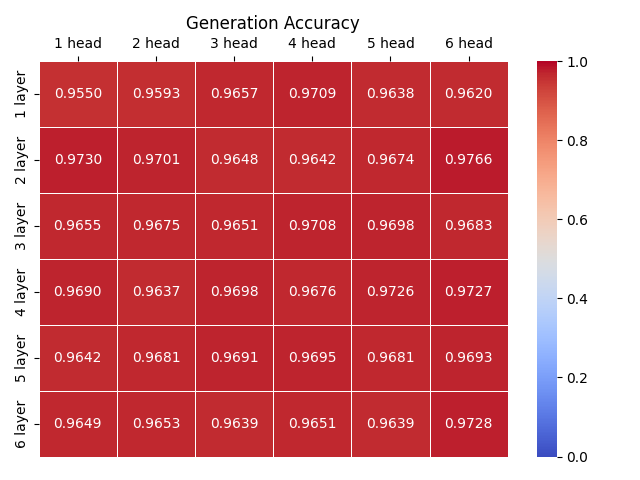}
        \label{fig1ap:image1}}
    \end{subfigure}
    \begin{subfigure}[200 Nodes]{
        \centering
        \includegraphics[width=0.30\linewidth]{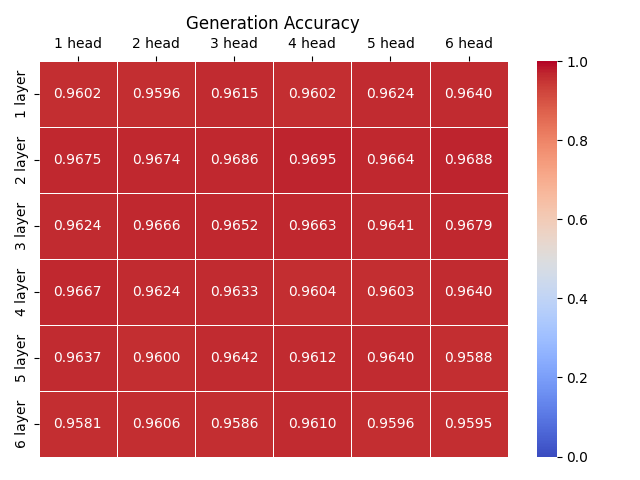}
        \label{fig1ap:image2}}
    \end{subfigure}
    \begin{subfigure}[300 Nodes]{
        \centering
        \includegraphics[width=0.30\linewidth]{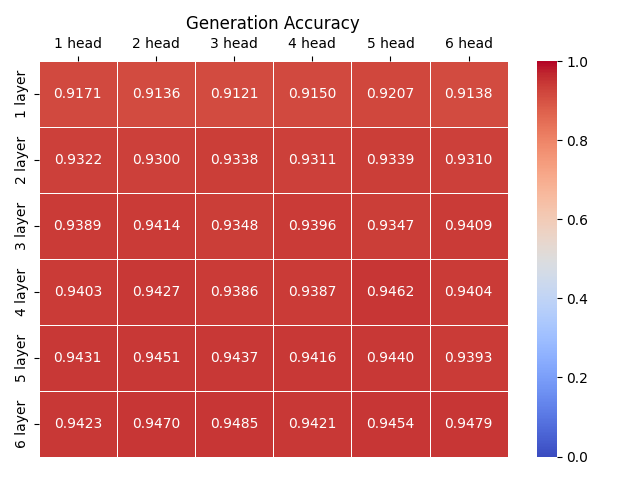}
        \label{fig1ap:image3}}
    \end{subfigure}


    \begin{subfigure}[400 Nodes]{
        \centering
        \includegraphics[width=0.30\linewidth]{Fig/E2/Embd=120_Temp=1_Nodes=400_CBar.png}
        \label{fig1ap:image4}}
    \end{subfigure}
    \begin{subfigure}[500 Nodes]{
        \centering
        \includegraphics[width=0.30\linewidth]{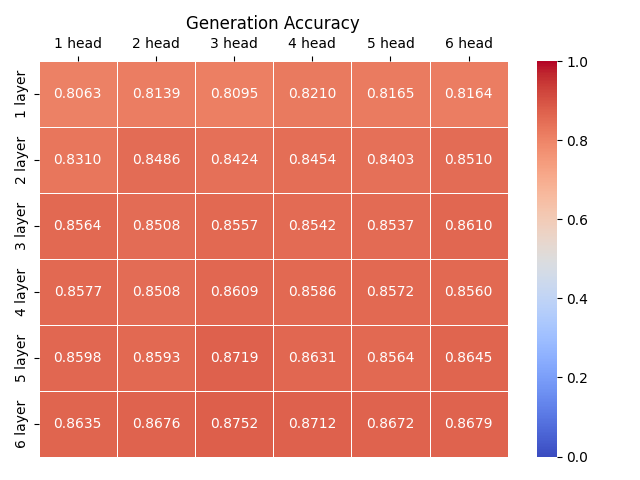}
        \label{fig1ap:image5}}
    \end{subfigure}

    \caption{Accuracy on the test datasets with embedding size $d= 120$. }
    \label{fig1ap:five_images}
\end{figure}


\begin{figure}[!h]
	\centering
	\begin{subfigure}[100 Nodes]{
			\centering
			\includegraphics[width=0.30\linewidth]{Fig/E3/Att_100_1_1_120_Layer1_Head1.png}
			\label{fig2ap:image1}}
	\end{subfigure}
	\begin{subfigure}[200 Nodes]{
			\centering
			\includegraphics[width=0.30\linewidth]{Fig/E3/Att_200_1_1_120_Layer1_Head1.png}
			\label{fig2ap:image2}}
	\end{subfigure}
	\begin{subfigure}[300 Nodes]{
			\centering
			\includegraphics[width=0.30\linewidth]{Fig/E3/Att_300_1_1_120_Layer1_Head1.png}
			\label{fig2ap:image3}}
	\end{subfigure}
	
	
	\begin{subfigure}[400 Nodes]{
			\centering
			\includegraphics[width=0.30\linewidth]{Fig/E3/Att_400_1_1_120_Layer1_Head1.png}
			\label{fig2ap:image4}}
	\end{subfigure}
	\begin{subfigure}[500 Nodes]{
			\centering
			\includegraphics[width=0.30\linewidth]{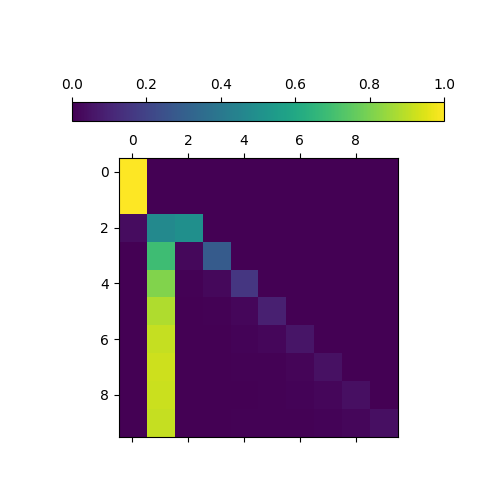}
			\label{fig2ap:image5}}
	\end{subfigure}
	
	\caption{The average attention in 1-layer and 1-head Transformers.}
	\label{fig2ap:five_images}
\end{figure}

\begin{figure}[!h]
	\centering
	\begin{subfigure}[100 Nodes]{
			\centering
			\includegraphics[width=0.28\linewidth]{Fig/E4/New_MLP_weight_1_1_120_100_20.png}
			\label{fig3ap:image1}}
	\end{subfigure}
	\begin{subfigure}[200 Nodes]{
			\centering
			\includegraphics[width=0.28\linewidth]{Fig/E4/New_MLP_weight_1_1_120_200_20.png}
			\label{fig3ap:image2}}
	\end{subfigure}
	\begin{subfigure}[300 Nodes]{
			\centering
			\includegraphics[width=0.28\linewidth]{Fig/E4/New_MLP_weight_1_1_120_300_20.png}
			\label{fig3ap:image3}}
	\end{subfigure}
	
	
	\begin{subfigure}[400 Nodes]{
			\centering
			\includegraphics[width=0.28\linewidth]{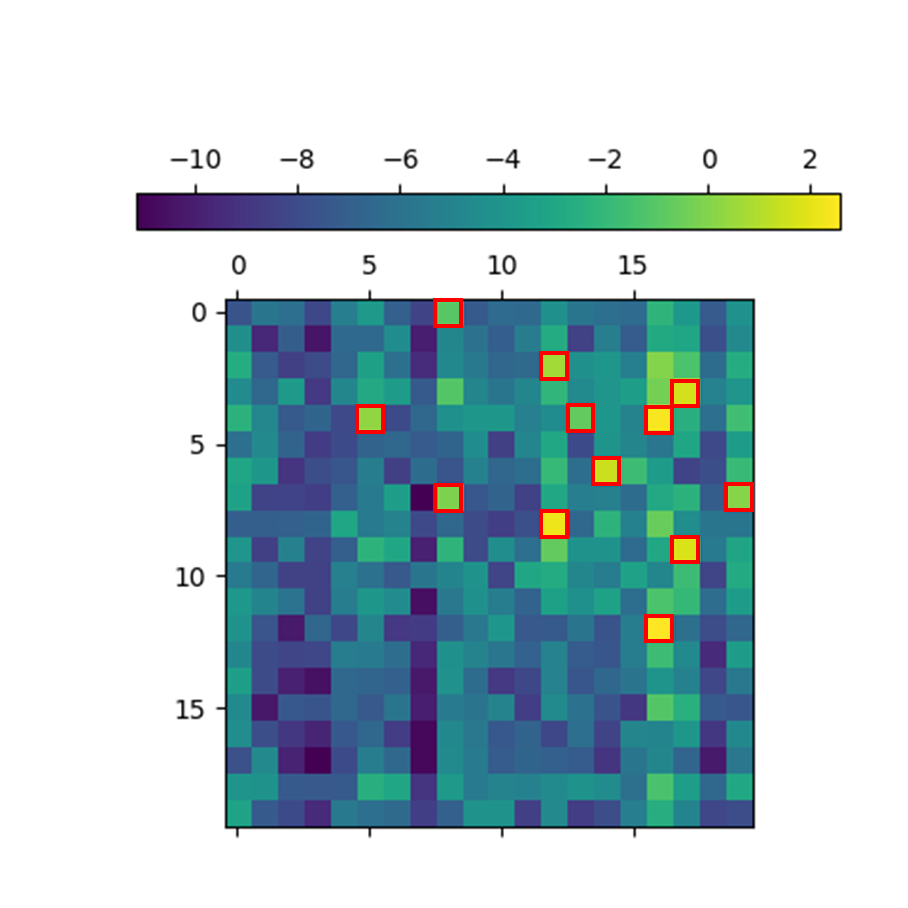}
			\label{fig3ap:image4}}
	\end{subfigure}
	\begin{subfigure}[500 Nodes]{
			\centering
			\includegraphics[width=0.28\linewidth]{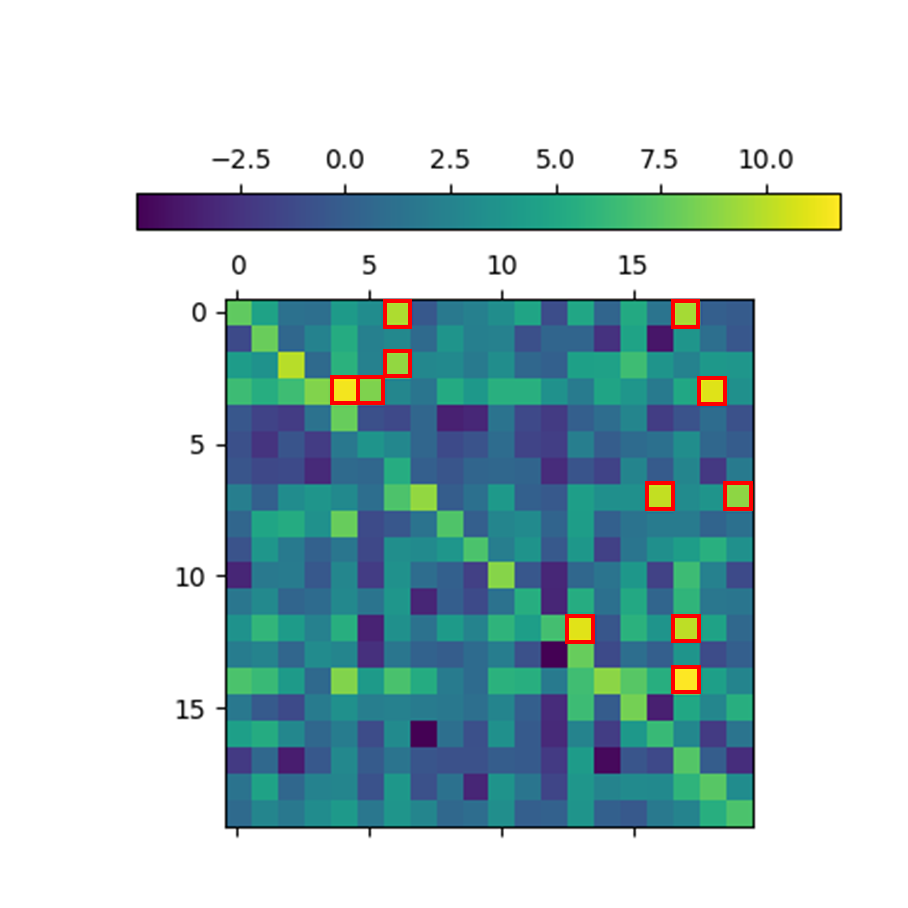}
			\label{fig3ap:image5}}
	\end{subfigure}
	
	\caption{The first 20 rows and columns of $\bm{W}^{M'}$ matrix in the 1-layer and 1-head Transformers (the red boxes correspond to $1$'s in the adjacency matrix $\bm{A}$).
			}
	\label{fig3ap:five_images}
\end{figure}

\begin{figure}[!h]
    \centering
    \begin{subfigure}[100 Nodes]{
        \centering
        \includegraphics[width=0.30\linewidth]{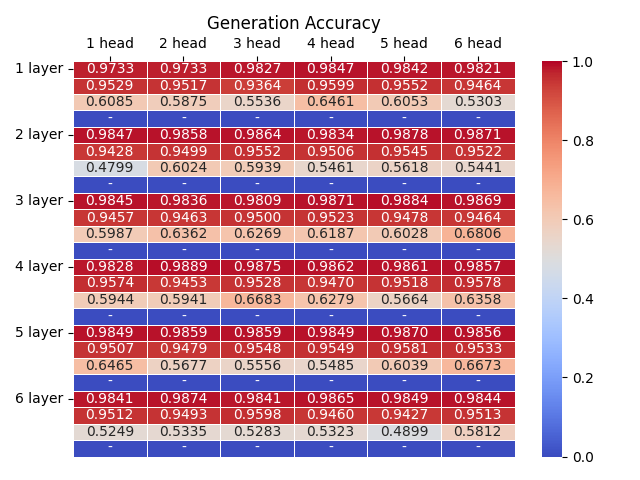}
        \label{fig4ap:image1}}
    \end{subfigure}
    \begin{subfigure}[200 Nodes]{
        \centering
        \includegraphics[width=0.30\linewidth]{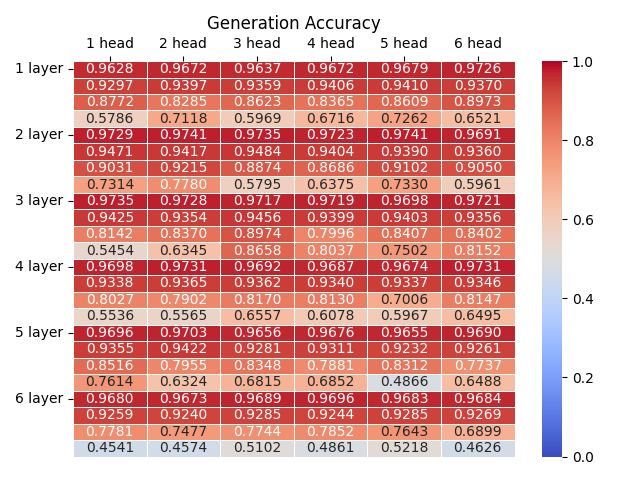}
        \label{fig4ap:image2}}
    \end{subfigure}
    \begin{subfigure}[300 Nodes]{
        \centering
        \includegraphics[width=0.30\linewidth]{Fig/E5/DF_Embd=120_Temp=1_Nodes=300_Cbar.png}
        \label{fig4ap:image3}}
    \end{subfigure}
    
    \begin{subfigure}[400 Nodes]{
        \centering
        \includegraphics[width=0.30\linewidth]{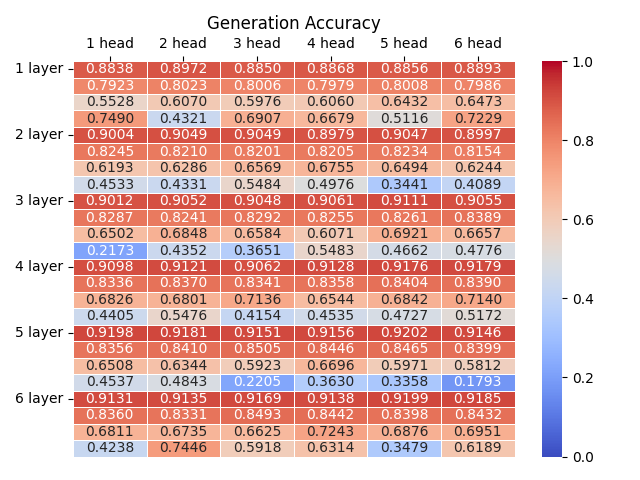}
        \label{fig4ap:image4}}
    \end{subfigure}
    \begin{subfigure}[500 Nodes]{
        \centering
        \includegraphics[width=0.30\linewidth]{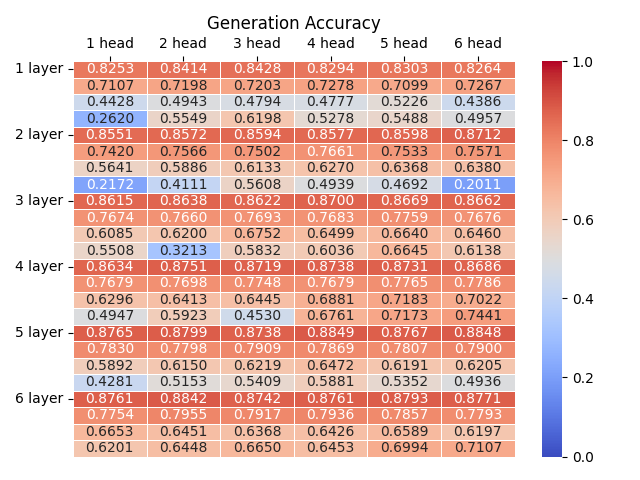}
        \label{fig4ap:image5}}
    \end{subfigure}

    \caption{The accuracy for $(s,t)$'s with different degrees. }
    \label{fig4ap:five_images}
\end{figure}


%
\vspace{-0.2cm}
In this section, we state the complete experimental results on the synthetic datasets. All these results are conducted on a single A100 GPU. 
\begin{itemize}
    \item The accuracy results on all these tests are 
presented in Figure \ref{fig1ap:five_images}.
\item The attention results on all the 1-layer and 1-head Transformers are 
presented in Figure \ref{fig2ap:five_images}.
\item The results of $\bm{W}^{M'}$'s are shown in Figure \ref{fig3ap:five_images}. 

\item The accuracy of the Transformers on the $(s,t)$ pairs of the four different categories are shown in Figure \ref{fig4ap:five_images}. 

\end{itemize}

As we can see, all these results are consistent with our conclusions.



\section{Path-planning in Blocksworld}
\label{sec:blocksworld}
To further validate the theoretical results in Section~\ref{sec:overview} and the practicability of the proposed path-finding task, we consider Blocksworld benchmark~\cite{valmeekam2023planning}. Blocksworld is a scenario consisting of a set of blocks identified by different colors. The blocks are either placed on table or on top of another block and the task is to plan a block manipulation from the source state to the target state. 

We formulate Blocksworld as a path-finding task. 
Here we construct a graph $G_{BW}$ for the case with $4$ blocks, where each node represents a state of the blocks. 
For example, node $0$ refers to the state that ``the red block is on top of the blue block, the blue block is on top of the orange block, the orange block is on top of the yellow block, and the yellow block is on the table''. 
$G_{BW}$ is a directed graph with $73$ nodes, and the adjacency matrix of $G_{BW}$ is presented in Figure~\ref{fig:BW-adjacency-real}. 


In the original Blocksworld task, the answer is a sequence of actions, which is equivalent to the notion of edges in $G_{BW}$. 
We reformulated it to let the model output a path from the given source state to the given target state, only consisting of the nodes.
This can be seen as a simplified version and a pure planning task.
We randomly select $80\%$ of all node pairs for training and the rest $20\%$ for testing, and generate $50000$ training sequences in the same format as introduced in Section~\ref{sec:prelim}. 
We mainly use Transformers with $1$ layer and $1$ head for the convenience of visualization.
\begin{figure}[t]
    \centering
    \begin{subfigure}[Adjacency Matrix of $G_{BW}$]{
        \centering
        \includegraphics[trim=5 5 5 5, clip, width=0.3\linewidth]{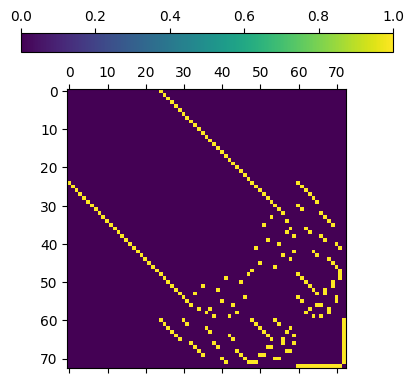}
        \label{fig:BW-adjacency-real}}
    \end{subfigure}
    \begin{subfigure}[The $\bm{W}^{M'}$ Matrix]{
        \centering
        \includegraphics[trim=40 25 17 40, clip, width=0.3\linewidth]{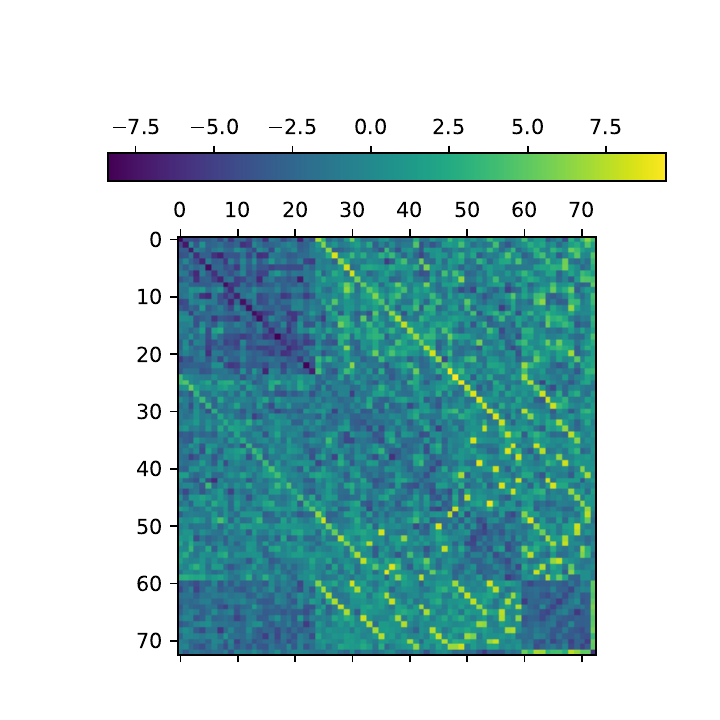}
        \label{fig:BW-adjacency-learned}}
    \end{subfigure}
    \begin{subfigure}[The Average Attention]{
        \centering
        \includegraphics[trim=40 25 17 40, clip, width=0.3\linewidth]{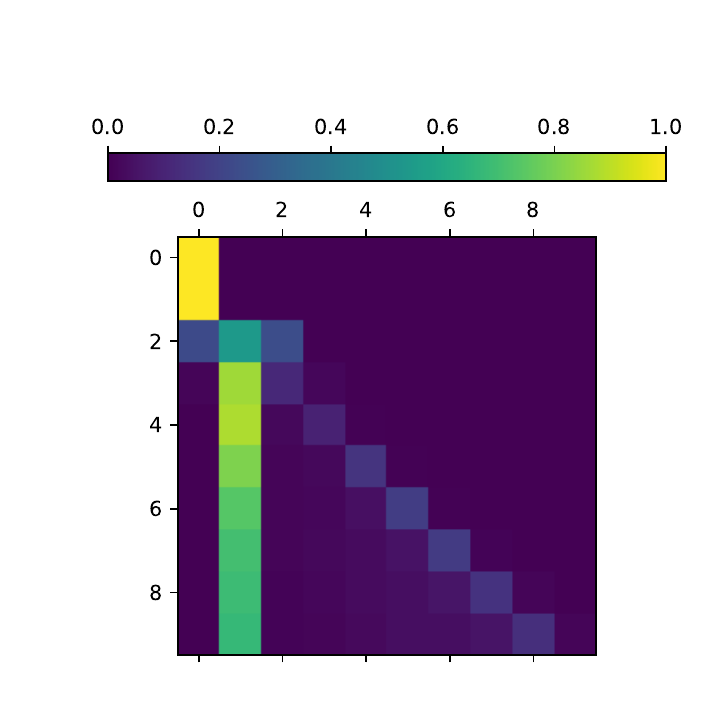}
        \label{fig:BW-attention}}
    \end{subfigure}
    \begin{subfigure}[Accuracy Curve]{
        \centering
        \includegraphics[width=0.35\linewidth]{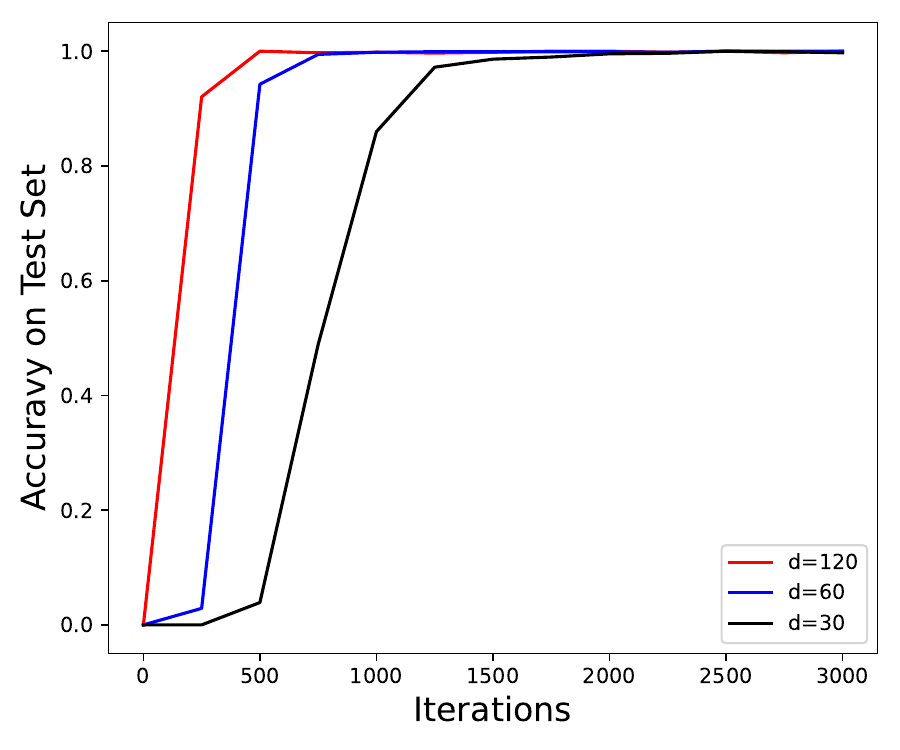}
        \label{fig:BW-accuracy}}
    \end{subfigure}
    \begin{subfigure}[Average Weight Gap]{
        \centering
        \includegraphics[width=0.35\linewidth]{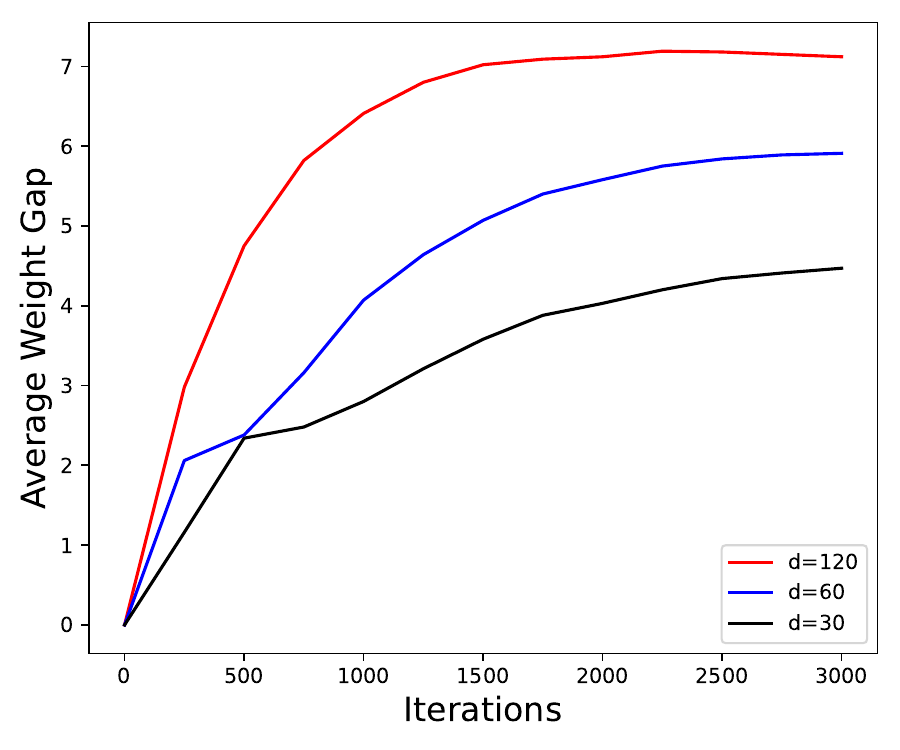}
        \label{fig:BW-weight-gap}}
    \end{subfigure}
    \caption{Accuracy, attention, and adjacency matrix results for the experiment on Blocksworld benchmark.}
    \label{fig:BW}
\end{figure}

\begin{figure}[t]
    \centering
    \begin{subfigure}[Observed Reachability]{
        \centering
        \includegraphics[trim=5 5 5 5, clip, width=0.3\linewidth]{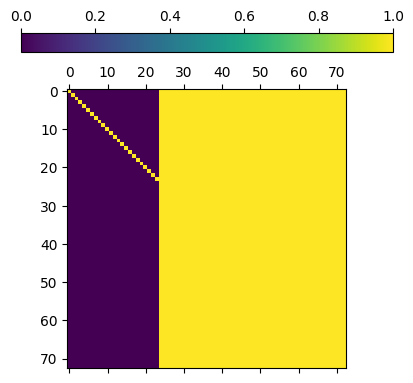}
        \label{fig:BW-reach-real}}
    \end{subfigure}
    \begin{subfigure}[The $\bm{W}^{V'}$ Matrix]{
        \centering
        \includegraphics[trim=40 25 17 40, clip, width=0.3\linewidth]{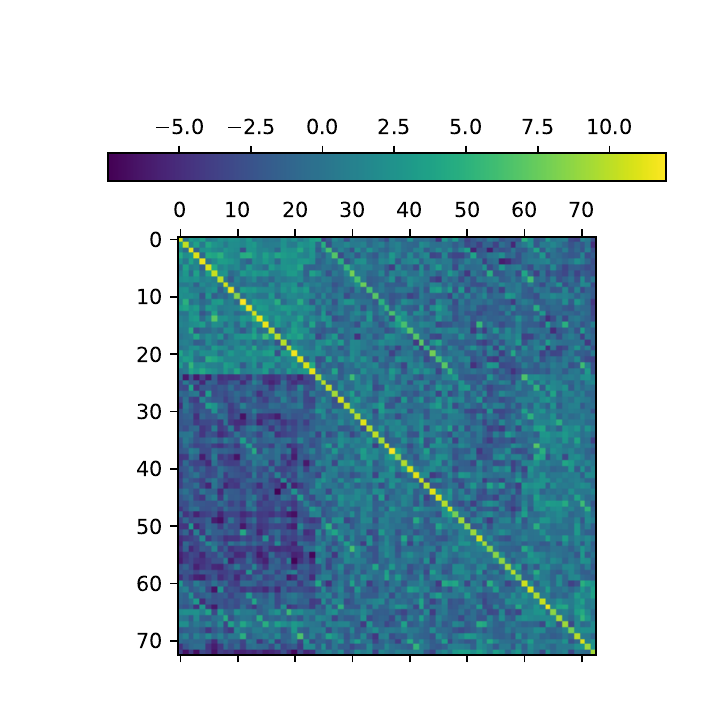}
        \label{fig:BW-reach-learned}}
    \end{subfigure}
    \begin{subfigure}[Average Weight Gap]{
        \centering
        \includegraphics[width=0.32\linewidth]{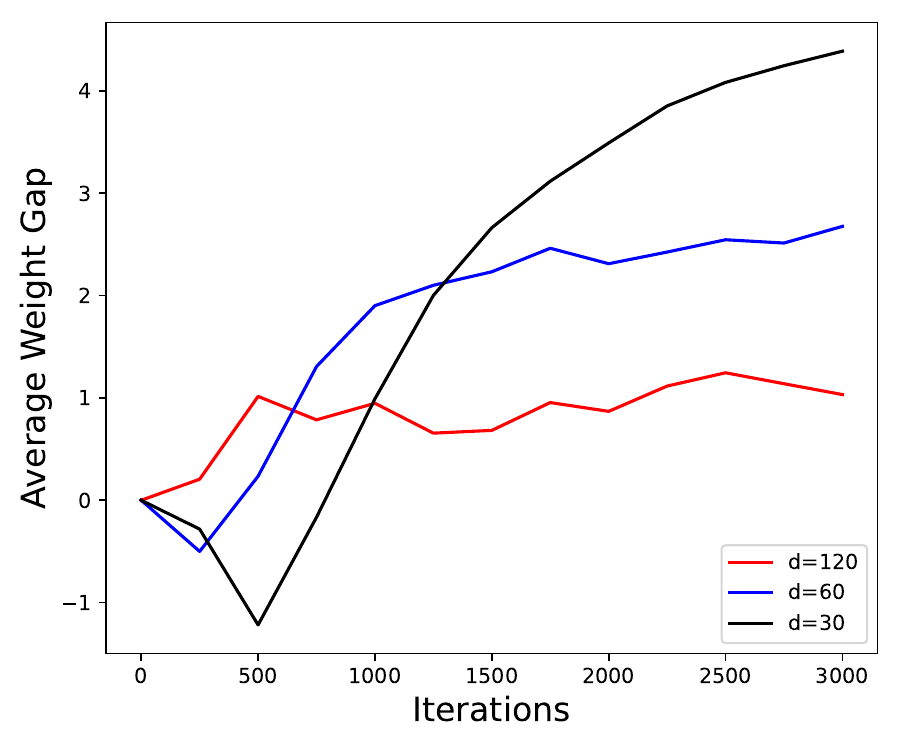}
        \label{fig:BW-reach-weight-gap}}
    \end{subfigure}
    \caption{Experiment for reachability on Blocksworld benchmark.}
    \label{fig:BW-reach}
\end{figure}
\subsection{Results}
We first present the accuracy results during training when using different embedding sizes $d \in \{30, 60, 120\}$. As shown in Figure~\ref{fig:BW-accuracy}, although a smaller embedding size results in a longer time to converge, all models reach an accuracy near $100\%$ at the end of the training. 

Then, we use the same method introduced in Section~\ref{sec:empirical} to visualize the attention map and the $\bm{W}^{M'}$ matrix for the model with $d=120$ after the 
entire iterations. In Figure~\ref{fig:BW-attention}, we can see that when predicting the tokens on the path, almost all the attention weights are on the second token which represents for the target node, demonstrating the capability of the model to learn a correct attention. For adjacency matrix, we find that the $\bm{W}^{M'}$ matrix in Figure~\ref{fig:BW-adjacency-learned} almost perfectly aligns to the real adjacency matrix of $G_{BW}$. And the weight gap (average edge weight minus average non-edge weight) for all models keeps increasing in the training process until convergence, as shown in Figure~\ref{fig:BW-weight-gap}.

In addition, we present the results related to observed reachability matrix in Figure~\ref{fig:BW-reach}. Figure~\ref{fig:BW-reach-real} shows the observed reachability in the training dataset. Although $G_{BW}$ is fully-connected, some reachability are not observed since we request that all training data has no cycle. Specifically, in this case, each of the first $24$ nodes is not observed reachable to any nodes other than itself. To validate whether the Transformer has captured this information, we construct $\bm{W}^{V'}$ matrix through the same method presented in Section~\ref{sec:empirical}. As shown in Figure~\ref{fig:BW-reach-learned}, the first 24 columns of the $\bm{W}^{V'}$ matrix are noticeably darker, which aligns with the observed reachability matrix in Figure~\ref{fig:BW-reach-real}. Furthermore, we plot the gap between the average weight of $\bm{W}^{V'}$ on observed reachability and the average weight of $\bm{W}^{V'}$ on non-observed reachability in Figure~\ref{fig:BW-reach-weight-gap}, and find that this gap keeps increasing for all models.
Since there does not exist any test pairs with degree 2 or more (as defined in Section \ref{sec:empirical}), we do not compare the accuracy between different degrees in Blocksworld.

In summary, our experimental results on the Blocksworld benchmark confirm our theoretical analyses (Theorem~\ref{Thm_1}) and empirical results on the synthetic data, and they
	at least partially explain the planning capability of the Transformer on the Blocksworld scenario.



\end{document}